\documentclass{article}
\usepackage{microtype}
\usepackage{graphicx}
\usepackage{subcaption}
\usepackage{booktabs}
\usepackage{hyperref}

\usepackage[accepted]{icml2026}

\usepackage{amsmath}
\usepackage{amssymb}
\usepackage{mathtools}
\usepackage{amsthm}

\usepackage[capitalize,noabbrev]{cleveref}

\theoremstyle{plain}
\newtheorem{theorem}{Theorem}[section]

\theoremstyle{definition}

\theoremstyle{remark}
\newtheorem{remark}[theorem]{Remark}

\usepackage[textsize=tiny]{todonotes}

\usepackage{multirow} 
\usepackage{cancel}
\usepackage{enumitem}

\usepackage{stfloats}

\icmltitlerunning{Efficient and Uncertainty-Aware Diffusion Framework for Offline-to-Online Reinforcement Learning}

\begin{document}

\twocolumn[
\icmltitle{Efficient and Uncertainty-Aware Diffusion Framework for Offline-to-Online Reinforcement Learning}

  \begin{icmlauthorlist}
    \icmlauthor{Ha Manh Bui}{jhu}
    \icmlauthor{Metod Jazbec}{uoa}
    \icmlauthor{Eric Nalisnick}{jhu}
    \icmlauthor{Anqi Liu}{jhu}
  \end{icmlauthorlist}

  \icmlaffiliation{jhu}{Department of Computer Science, Johns Hopkins University, Baltimore, MD, U.S.A.}
  \icmlaffiliation{uoa}{AMLab, University of Amsterdam, Amsterdam, Netherlands}
  \icmlcorrespondingauthor{Ha Manh Bui}{hb.buimanhha@gmail.com}
  \icmlkeywords{Machine Learning, ICML}
  
\vskip 0.3in
]

\printAffiliationsAndNotice{}

\begin{abstract}
Offline-to-Online Reinforcement Learning (O2O-RL) leverages an offline, pre-trained policy to minimize costly online interactions. Although data-efficient, O2O-RL is susceptible to shifts between offline and online distributions. Existing work aims to mitigate the harm of this shift by finetuning the policy on trajectory data sampled from a diffusion model. Inspired by this line of work, we propose DUAL: an efficient \textbf{D}iffusion \textbf{U}ncertainty-\textbf{A}ware framework for offline-to-online reinforcement \textbf{L}earning. DUAL utilizes the prior knowledge of the diffusion model to distill a fast-sampling diffusion actor policy and transition model in the offline phase. DUAL also employs a Laplace approximation and distance transition-state-shift detection, thereby using uncertainty quantification to improve exploration versus exploitation in the online phase. We formally show that our actor loss with the Laplace approximation provides a proxy for a principled estimate of epistemic uncertainty. Empirically, DUAL improves the online expected return over O2O-RL baselines across multiple settings and environments.
\end{abstract}

\section{Introduction}
Online Reinforcement Learning (RL) often requires extensive interactions of an agent with the environment to learn the optimal policy, a process that can be costly in high-stakes applications. In contrast, offline RL utilizes pre-existing datasets to train a policy, avoiding the need for resource-intensive online interactions. Hence, to achieve optimal policy with fewer interactions in the online RL phase, Offline-to-Online RL (O2O-RL) has emerged as a promising approach in the real world~\citep{lee2021offline,sutton1998RL}. In O2O-RL, the agent undergoes two phases of learning. First, the policy is pre-trained from an offline dataset. Then, this policy is used to interact with the environment and fine-tuned through online interactions. However, due to the shift between online and offline distributions~\citep{yu23actorritic,lee2021offline}, most O2O-RL algorithms~\citep{kostrikov2022offline,nakamoto2023calql,zhang2024perspective} still need several costly online interactions to learn the optimal policy in the online phase. 

Capitalizing on diffusion models' strength in modeling complex distributions over large offline datasets~\citep{ho2020denoising}, recent work has extensively applied them to offline RL~\citep{zhu2024diffusionmodelsreinforcementlearning,zhu2024madiff,psenka24learningdiff,li23Hierarchicaldiffusion,ni23metadiffuser,lee2024gta,wang2026beyond}. In general, there are two popular usages of diffusion models in RL, including \textit{diffusion planner} and \textit{diffusion policy}. The diffusion planner~\citep{janner22Planning,liang23adaptdiffuser,foffano2026adversarial} applies diffusion models to planning sequential decisions (i.e., the whole trajectory) with a sequence of state and action pairs.  In O2O-RL, to address distribution shift, \citet{liu2024edis,huang25O2O} have recently pre-trained a diffusion planner on offline datasets, then generated trajectories aligned with the online distribution, thereby fine-tuning actor-critic models with augmented data. However, this kind of approach does not fully exploit diffusion models' modeling capabilities beyond data augmentation, and their policy learning still relies on simple function classes.

On the other hand, diffusion policy~\citep{wang2023diffusion,ding2024diffusionbased,ying2026exploratory} uses diffusion models as the parameterized policy class to output a single-step action, where the sampling target is the action conditioned on the state. Empirically, diffusion policy is often simple and performs well on short-term planning tasks (e.g., controlling a robot to maximize movement speed) by leveraging diffusion models' ability to model complex action distributions. That said, the diffusion policy, which operates without lookahead planning, often underperforms diffusion planner in complex long-term planning RL tasks (e.g., robot navigation, manipulation tasks)~\citep{lu2025what,chen2023offline,scheikl2023movement}. When used in a policy training and inference loop (e.g., Diff-QL~\citep{wang2023diffusion}, IDQL~\citep{hansenestruch2023idql}, DTQL~\citep{chen2024diffusion}, DACER~\citep{wang2024DACER}, etc.), there are also concerns about the trade-off between computational speed and output quality, via the choice of the number of denoising steps. In addition, in the diffusion for O2O-RL~\citep{liu2024edis,huang25O2O}, the policy, either deterministic or parameterized as a distribution with learned variance, only provide aleatoric (data) uncertainty, and is often overconfident under distribution shifts~\citep{chua2018deepRL,bui2024density}. In O2O-RL, without epistemic (model) uncertainty, such overconfident models usually exploit a suboptimal policy, thereby hindering exploration of the optimal policy in the online phase~\citep{osband2016deepexplore,mnih2015humanlevel}. 

Motivated by both diffusion planner and diffusion policy, we ask whether we can exploit the benefit of both sides in a principled way for O2O-RL. Specifically, we ask the following two questions: \textit{(1) Can we design an efficient diffusion actor policy that inherits diffusion planner's long-term planning ability in O2O-RL?; (2) Can we improve the UQ of this diffusion policy to balance exploration and exploitation in the online phase?}

To answer the two questions above, we introduce DUAL, a unified diffusion framework for O2O-RL. Specifically, DUAL utilizes the prior knowledge of diffusion planner to distill a fast-sampling diffusion actor policy and transition model in the offline phase. This helps DUAL's policy training and quick inference while maintaining high output quality by leveraging the rich expressiveness of the diffusion planner. After that, we design a novel diffusion-policy UQ method that includes two terms. The first is a theoretically sound epistemic uncertainty term by the Laplace approximation with the policy improvement in actor-critic. The second term is a distance transition-state-shift detection by the extracted diffusion-based transition model. These two novel terms enable DUAL to account for model uncertainty and measure distribution shifts, thereby supporting the balance between exploration and exploitation in O2O-RL. Our contributions can be summarized as follows:
\begin{enumerate}[leftmargin=13pt,topsep=0pt,itemsep=0mm]
    \item We rigorously introduce DUAL: an efficient \textbf{D}iffusion \textbf{U}ncertainty-\textbf{A}ware framework for offline-to-online reinforcement \textbf{L}earning. Notably, DUAL is compatible with and can be implemented on top of multiple modified critics and data-augmentation O2O-RL related work.
    \item We formally show that DUAL's actor loss with the Laplace approximation provides a principled estimate of epistemic uncertainty. This helps DUAL leverage a high-quality epistemic uncertainty to balance exploration and exploitation in the online phase.
    \item We empirically show DUAL is computationally efficient and achieves a notable online return improvement over O2O-RL and diffusion-RL baselines across MuJoCo, AntMaze, Frozen-Lake, and Adroit environments. Ablation studies confirm the importance of DUAL's UQ to learn the optimal policy in online phase.
\end{enumerate} 
\section{Preliminary}
We consider a Markov Decision Process (MDP): $\mathcal{M}_\gamma:=(\mathcal{S}, \mathcal{A}, \mathbb{P}, r, \gamma, \rho_0)$, where $\mathcal{S}$ is the state space, $\mathcal{A}$ is the action space, $\mathbb{P}:\mathcal{S}\times \mathcal{A}\to \Delta(\mathcal{S})$ is the transition operator that takes a state-action pair and returns a distribution over states, $r:\mathcal{S}\times \mathcal{A}\to [0, R_{\max} ]$ is the deterministic reward function, $\gamma\in [0,1)$ is the discount factor, and $\rho_0$ is the initial distribution over states. A policy $\pi:\mathcal{S}\to \Delta(\mathcal{A})$ describes a distribution over actions for each state. Our goal is to learn the best policy $\pi^*$ that maximizes cumulative discounted reward, i.e., $\sum_t\mathbb{E}_{a_t\sim \pi^*}\gamma^tr(s_t,a_t)$. The value function and Quality(Q)-function of policy $\pi$ are $V^\pi(s)=\sum_t\mathbb{E}_{a_t\sim \pi(s_t)}[\gamma^tr(s_t,a_t)|s_0=s]$, $Q^\pi(s,a)=\sum_t\mathbb{E}_{a_t\sim \pi(s_t)}[\gamma^tr(s_t,a_t)|s_0=s, a_0=a]$, respectively. 

\subsection{Offline-to-Online Reinforcement Learning}\label{sec:O2O-RL}
In O2O-RL, the agent undergoes two learning phases:

\textbf{Offline pre-training}:
In the offline phase, the agent learns from a previously collected dataset $\mathcal{D}_{\text{offline}}=\{(s,a,r,s')\}$, possibly collected by another policy. Like recent offline RL methods~\citep{levine2020offlinereinforcementlearningtutorial}, based on the recalibration of Q-value according to the self-consistency requirements dictated by the Bellman equation, i.e., $Q^{\pi}(s,a) = r(s,a) + \gamma \mathbb{E}_{s' \sim \mathbb{P}, a' \sim \pi}[Q^{\pi}(s',a')]$~\citep{watkins1992QLearning}, our work builds on approximate dynamic programming methods that minimize temporal difference error as follows:
\begin{align}\label{eq:qTD}
    \mathcal{L}_{TD}(\phi) &:= \mathbb{E} _ {({s},{a},{s}^{\prime})\sim\mathcal{D}_{\text{offline}}}\left[\left(r(s,a)+ \gamma \max_{a'}Q_{\hat{\phi}}(s',a') \right.\right. \nonumber \\
    & \quad \left.\left.-Q_{\phi}(s, a)\right)^{2}\right],
\end{align}
where $Q_{\phi}(s,a)$ is a parameterized Q-function, $Q_{\hat{\phi}}(s,a)$ is a target network, and the policy is defined as $\pi(s)=\arg\max_{a}Q_{\phi}(s,a)$. In this offline phase, literature on O2O-RL often follows offline RL methods that modify the value function loss in Eq.~\ref{eq:qTD} to regularize the value function so that the resulting policy is close to the behavior policy while maximizing the estimated Q-value (e.g., CQL~\citep{kumar2020CQL}, IQL~\citep{kostrikov2022offline}, SO2~\citep{zhang2024perspective}, CalQL~\citep{nakamoto2023calql}, etc.). Notably, our method focuses on designing a diffusion actor policy and can combine with any of the modified critic methods above.

\textbf{Online fine-tuning}:
In the online phase, the pre-trained agent continues to learn from interacting with the environments. Typically, for every time step $t$, the learned agent observes state $s_t$ and performs $a_t$ based on its policy $\pi(s_t)$. Then, the environment responds by transitioning to the next state $s_{t+1} \sim \mathbb{P}(s_{t+1} | s_t, a_t)$ and providing a reward $r_t = r(s_t, a_t)$. The tuple $(s_t,a_t,r_t,s_{t+1})$ is stored in the online replay buffer $\mathcal{D}_{\text{online}}$, i.e., $\mathcal{D}_{\text{online}} = \mathcal{D}_{\text{online}} \cup (s_t,a_t,r_t,s_{t+1})$. After that, O2O-RL methods~\citep{nakamoto2023calql,liu2024edis,huang25O2O} typically employ an actor-critic framework, i.e., the agent re-updates (fine-tunes) its knowledge by using a policy iteration process based on the experience replay buffer $\mathcal{D} = \mathcal{D}_{\text{offline}}\cup \mathcal{D}_{\text{online}}$. Specifically, in the policy evaluation phase, the Q-function $Q_\phi$ is optimized by the temporal difference error loss. Then, in the policy improvement phase, the policy function $\pi_\theta$ is sought by optimizing its parameter $\theta$ to maximize the corresponding current Q-value. Finally, this cycle repeats, with the agent using the uncertainty of policy $\pi_\theta$ balancing exploration (trying new actions) and exploitation (using known good actions) to maximize cumulative reward (return), i.e., $\sum_t\mathbb{E}_{a_t\sim \pi^*}\gamma^tr(s_t,a_t)$ in the online phase. 

\subsection{Diffusion planner in O2O-RL}\label{sec:diffusionRL}
Literature on Diffusion in O2O-RL~\citep{liu2024edis,huang25O2O} focuses on utilizing a diffusion planner~\citep{janner22Planning} to augment trajectory data that conforms to the online distribution, thereby fine-tuning the above actor-critic model. Specifically, the diffusion planner represents trajectories as a single-channel image by:
\begin{align}
    \tau = \begin{bmatrix}
            \tau_{s_0} & \tau_{s_1} & \cdots & \tau_{s_{H-1}} & \tau_{s_H} \\
            \tau_{a_0} & \tau_{a_1} & \cdots & \tau_{a_{H-1}} & \tau_{a_H}
    \end{bmatrix},
\end{align}
where $\tau_{s_i}$ and $\tau_{a_i}$ are the state and action in the trajectory at time step $i\in [H]$, with $H$ being the trajectory length. Then, the diffusion planner tries to ``diffuse'' training data from the replay buffer $\mathcal{D}_{\text{offline}}$  with random noise $N$ times (forward), and learns to reverse the diffusion process to output new trajectories (backward) as follows: 

\textbf{Forward diffusion} aims to “diffuse” training data with random noise. Specifically, given a training data $\tau^0~\sim q$, where $q$ is the probability distribution to be learned, then it repeatedly adds noise to it by $\tau^0 \rightarrow \tau^1 \rightarrow \cdots \rightarrow \tau^N$, by the Markov assumption, thus $q(\tau^{1:N}|\tau^0) = \prod_{n=1}^N q(\tau^n|\tau^{n-1})$, where $q(\tau^n|\tau^{n-1}) = \mathcal{N}(\tau^n;\sqrt{1-\beta_n}\tau^{n-1},\beta_n \mathbf{I})$; 

\textbf{Backward diffusion} tries to reverse the diffusion process to output new trajectories by $\tau^N \rightarrow \tau^{N-1} \rightarrow \cdots \rightarrow \tau^0$. The key idea of Denoising Diffusion Probabilistic Model~\citep{ho2020denoising} is to use a neural network $\epsilon_\theta$ parametrized by $\theta$ to map from $(\tau^n,n)$ to $\tau^{n-1} \sim \mathcal{N}(\mu_\theta(\tau^n,n), \Sigma_n)$, where $\mu_\theta(\tau^n,n) = \frac{\tau^n-\epsilon_\theta(\tau^n,n)\beta_n/\sqrt{1-\Bar{\alpha}_n}}{\sqrt{\alpha_n}}$, $\Bar{\alpha}_n=\alpha_1 \cdots \alpha_n$, and $\alpha_n = 1-\beta_n$. This gives us a backward diffusion process $p_\theta$ defined by $p_\theta(\tau^{0:N}) := p(\tau^N)\prod_{n=1}^N p_\theta(\tau^{n-1}|\tau^n)$, where $p(\tau^N) = \mathcal{N}(\tau^N;0,\mathbf{I})$.

\textbf{Objective function}: The goal now is to learn the parameters such that $p_\theta(\tau^0)$ is as close to $q(\tau^0)$ as possible. Unfortunately, $p_\theta(\tau^0) = \int p_\theta(\tau^{0:N})d\tau^{1:N}$, where $\tau^{1:N}$ are latent variables, which is intractable. Hence, by using maximum likelihood estimation with variational inference, the objective function can be derived from an Evidence Lower Bound (ELBO) and is optimized by:
\begin{align}\label{eq:diff_obj}
    \mathcal{L}_{\text{VLB}}(\theta,n) := \mathbb{E}_{\tau^0\sim q, z\sim \mathcal{N}(0,\mathbf{I})} \left[||\epsilon_\theta(\tau^n,n)-z||^2\right].
\end{align}
After training the diffusion planner with Eq.~\ref{eq:diff_obj} on $\mathcal{D}_{\text{offline}}$, Diffusion for O2O-RL, e.g., EDIS~\citep{liu2024edis}, use an energy guider $\varepsilon(\tau) = \frac{q(\tau^0)^\text{online}}{q(\tau^0)^\text{offline}}$ (i.e., a separated classifier that is trained from $\mathcal{D}_{\text{offline}}\cup \mathcal{D}_{\text{online}}$) to augment trajectory data that conforms to the online distribution by $p_\theta(\tau)^{\text{online}} \propto p_\theta(\tau)^{\text{offline}} e^{-\varepsilon(\tau)}$. Finally, the augmented data from $p_\theta(\tau)^{\text{online}}$ is added to the experience replay buffer to fine-tune another actor-critic model. Yet, this framework still suffers from two limitations, including the policy being a non-diffusion model and its inability to provide epistemic uncertainty. Hence, to address these issues, we introduce our new diffusion framework in the following section.

\section{Efficient \& Uncertainty-Aware Diffusion Actor-Critic O2O-RL Framework}
\begin{figure*}[ht!]
    \vspace{-0.05in}
    \centering
    \includegraphics[width=0.7\linewidth]{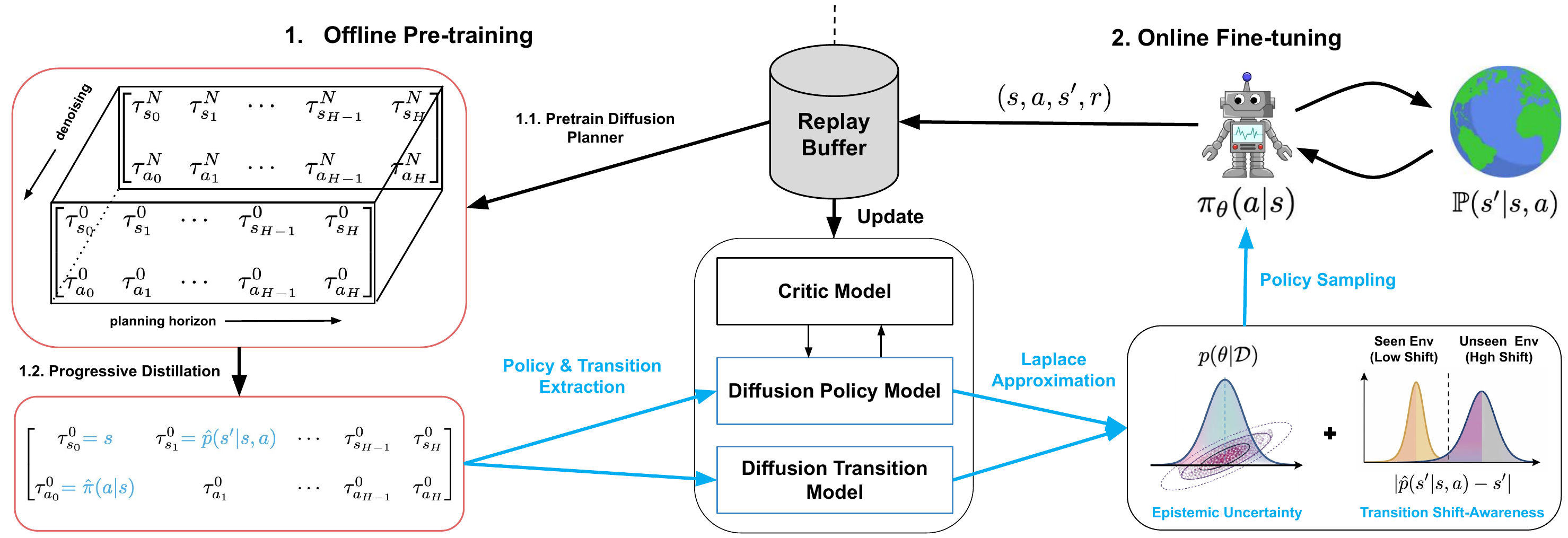}
    \caption{DUAL's overview framework, including an efficient diffusion actor policy \& transition function extracted from diffusion planner, and a high-quality epistemic UQ \& transition shift-awareness for policy sampling (colored by \textcolor{cyan}{cyan}). Overall algorithm is in Alg.~\ref{alg:episodic}.}
    \label{fig:framework}
    \vspace{-0.2in}
\end{figure*}

In this section, we will thoroughly present two key components in our framework (summarized in Fig.~\ref{fig:framework} and Alg.~\ref{alg:episodic}), including an efficient diffusion actor policy from the diffusion planner (Sec.~\ref{sec:method:distilate}), and a novel method to quantify the uncertainty of this diffusion policy to balance exploration and exploitation in the online phase (Sec.~\ref{sec:method:trainUQ} and Sec.~\ref{sec:method:inferUQ}).

\subsection{Distillation and training a fast-sampling actor policy from diffusion planner}\label{sec:method:distilate}
\textbf{Extract actor policy}: As introduced above, although the diffusion policy performs well in short-term planning, it often underperforms the diffusion planner in complex long-term planning RL tasks due to operating without lookahead planning~\citep{lu2025what,chen2023offline}. Therefore, motivated by the rich expression of diffusion planners for long-term planning RL tasks, we leverage the diffusion planner in O2O-RL~\citep{liu2024edis,huang25O2O} from Sec.~\ref{sec:diffusionRL} to extract our diffusion policy. Specifically, our policy extraction step is based on the following theorem:

\begin{theorem}~\citep{lin23safeofflineRL}\label{thm:policy_diff}
    Consider the distribution of trajectory from the backward diffusion process $p_\theta(\tau)$ with a trajectory $\tau$ that is composed of a sequence of state-action pairs: $\tau=\{\tau_{s_0}, \tau_{a_0}, \tau_{s_1}, \tau_{a_1}, ...,\tau_{s_H}, \tau_{a_H}\}$. Then, the policy can be inferred from the distribution of trajectory $p_\theta(\tau)$ since the actions can be determined by $$\pi_\theta(a|s)\propto\int p_\theta(\tau_{>0},\tau_{s_0}=s, \tau_{a_0}=a)d\tau_{>0},$$
    where $p_\theta(\tau_{>0},\tau_{s_0}=s, \tau_{a_0}=a)\propto p_\theta(\tau_{>0}|\tau_{s_0}=s, \tau_{a_0}=a)$ and  $\tau_{>0} := \{\tau_{s_1}, \tau_{a_1}, ...,\tau_{s_H}, \tau_{a_H}\}$. Proof is in Apd.~\ref{proof:thm:policy_diff}.
\end{theorem}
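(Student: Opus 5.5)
The plan is to read the statement as a marginalization identity for the joint trajectory density $p_\theta(\tau)$ produced by the backward process $p_\theta(\tau^{0:N})$. First, I will fix what the policy means. The policy implied by a trajectory distribution is the conditional law of the first action given the first state: $\pi_\theta(a|s) := p_\theta(\tau_{a_0}=a \mid \tau_{s_0}=s)$. This is well defined whenever the marginal $p_\theta(\tau_{s_0}=s)>0$.

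Next, I will marginalize out the future. Write $p_\theta(\tau)=p_\theta(\tau_{s_0},\tau_{a_0},\tau_{>0})$, where $p_\theta(\tau)=\int p_\theta(\tau^{0:N})\,d\tau^{1:N}$ is a proper density. Integrating over $\tau_{>0}$ gives
\begin{align*}
p_\theta(\tau_{s_0}=s,\tau_{a_0}=a)=\int p_\theta(\tau_{>0},\tau_{s_0}=s,\tau_{a_0}=a)\,d\tau_{>0}.
\end{align*}
Fubini/Tonelli justifies this because the integrand is nonnegative and integrable. Bayes' rule then gives
\begin{align*}
\pi_\theta(a|s)=\frac{p_\theta(\tau_{s_0}=s,\tau_{a_0}=a)}{p_\theta(\tau_{s_0}=s)}.
\end{align*}
The denominator does not depend on $a$, so this yields the claimed proportionality $\pi_\theta(a|s)\propto\int p_\theta(\tau_{>0},\tau_{s_0}=s,\tau_{a_0}=a)\,d\tau_{>0}$. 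The normalizing constant is exactly $p_\theta(\tau_{s_0}=s)=\int\!\!\int p_\theta(\tau)\,d\tau_{>0}\,da$.

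For the side remark, I will factor the joint as
\begin{align*}
p_\theta(\tau_{>0},s,a)=p_\theta(\tau_{>0}\mid s,a)\,p_\theta(s,a).
\end{align*}
With $(s,a)$ held fixed, the factor $p_\theta(s,a)$ is constant in $\tau_{>0}$. Hence the joint is proportional to the conditional as a function of $\tau_{>0}$, which is the stated relation. In practice this conditional is what inpainting-style conditioning of the diffusion planner samples: clamp $\tau_{s_0}=s$ and $\tau_{a_0}=a$ in every denoising step.

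I expect the main obstacle to be interpretive rather than technical. I need to make precise that the first-step marginal of the planner's trajectory law is a valid policy. This requires that $p_\theta(\tau_{s_0}=s)>0$ on the states of interest, which holds under the Gaussian backward kernels since they give full support. It also requires being careful that the proportionality in the remark is in $\tau_{>0}$, not in $(s,a)$. Once the definition of $\pi_\theta$ as a conditional of the learned joint is fixed, the rest is marginalization and Bayes' rule.
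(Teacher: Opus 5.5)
Your proposal is correct and follows essentially the same route as the paper's proof. Both define $\pi_\theta(a|s)$ as the conditional of the first action given the first state, marginalize the joint trajectory density over $\tau_{>0}$, note that $p_\theta(s)$ does not depend on $a$, and factor the joint as $p_\theta(\tau_{>0}\mid s,a)\,p_\theta(s,a)$, where the second factor is constant in $\tau_{>0}$. Your version is slightly more careful than the paper's: you integrate only over $\tau_{>0}$ rather than loosely over all of $\tau$, you check that $p_\theta(\tau_{s_0}=s)>0$, and you state which variable each proportionality is taken in.
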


Note that the result of Thm~\ref{thm:policy_diff} has also been used empirically in~\citet{lin23safeofflineRL}. Thm.~\ref{thm:policy_diff} suggests that the trajectory distribution from the diffusion planner model can be seen as a special diffusion policy. Specifically, the output action $a$ for a given input state $s$ can be extracted by adopting  the first action of a trajectory sampled from the conditional distribution $p_\theta(\tau|\tau_{s_0}=s)$, i.e., 
\begin{align}\label{eq:policy_diff_act}
    a \sim \tau_{a_0}^0, \text{ where } \tau \sim p_\theta(\tau|\tau_{s_0}=s).
\end{align}

\textbf{Progressive distillation for fast-sampling}: Training and inference with the diffusion policy above can incur huge computational costs when $N$ is high, and instability with low-quality trajectory outputs when $N$ is small, thus being expensive and not scalable in RL. To address this problem, we adapt the progressive distillation for diffusion training from~\citet{salimans2022progressive}. In particular, given the teacher diffusion sampler $\epsilon_\theta(\tau^n;n)$ that maps random noise $\tau^N$ to samples $\tau^0$ in $N$ deterministic steps by $n\in\{N,N-1,N-2,\cdots,3,2,1\}$, we will iteratively distill it into a new student sampler $\epsilon_{\hat{\theta}}(\tau^n;n)$ that maps random noise $\tau^N$ to samples $\tau^0$ in $N/2$ deterministic steps by $n\in\{N,N-2,\cdots,3,1\}$, until $N=1$.

\textbf{Actor objective function}: After completing the policy extraction and progressive distillation, we are ready to present our policy actor objective function in training. Recall that in the offline phase, our goal is to train a policy close to the behavior policy while maximizing the estimated Q-value. First, it is worth noticing that the diffusion objective function in Eq.~\ref{eq:diff_obj} results in a behavior-cloning objective for the policy $\pi_\theta$ by the following theorem:
\begin{theorem}\label{thm:policy_bc}
    Minimizing the diffusion planner's objective function in Eq.~\ref{eq:diff_obj} on the offline buffer $\mathcal{D}_{\text{offline}}=\{(s,a,r,s')\}$ gives us an upper bound on the Kullback–Leibler (KL) Divergence of $\pi_{\mathcal{D}_{\text{off}}}(a|s)$ from $\pi_\theta(a|s)$ due to: (1) $D_{\text{KL}}\left(q(\tau^0) \parallel p_\theta(\tau^0)\right) \geq D_{\text{KL}}\left(\pi_{\mathcal{D}_{\text{off}}}(a|s) \parallel \pi_\theta(a|s) \right)$; (2) if $p_\theta$ is sufficiently expressive that $\min_\theta D_{\text{KL}}\left(q(\tau^0) \parallel  p_\theta(\tau^0) \right)=0$, then $\arg\min_\theta D_{\text{KL}}\left(q(\tau^0) \parallel  p_\theta(\tau^0) \right)=  \arg\min_\theta D_{\text{KL}}\left(\pi_{\mathcal{D}_{\text{off}}}(a|s)  \parallel  \pi_\theta(a|s)\right)$, where $\pi_{\mathcal{D}_{\text{off}}}(a|s)$ is the behavior policy in $\mathcal{D}_{\text{offline}}$. Proof is in Apd.~\ref{proof:thm:policy_bc}.
\end{theorem}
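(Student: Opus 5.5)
Part (1) follows from the chain rule for KL divergence together with its data-processing (marginalization) property; part (2) then follows from nonnegativity of KL and the sandwich given by (1).

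\textbf{Step 1 (factorize the trajectory distributions).} Write $\tau^0=(\tau_{s_0},\tau_{a_0},\tau_{>0})$. The data distribution factorizes as $q(\tau^0)=q(\tau_{s_0})\,\pi_{\mathcal{D}_{\text{off}}}(\tau_{a_0}|\tau_{s_0})\,q(\tau_{>0}|\tau_{s_0},\tau_{a_0})$, where $q(\tau_{s_0})=d_{\mathcal{D}_{\text{off}}}(s)$ is the offline state marginal and the behavior policy is the conditional of the first action given the first state. Similarly, by Thm.~\ref{thm:policy_diff},
\[
p_\theta(\tau^0)=p_\theta(\tau_{s_0})\,\pi_\theta(\tau_{a_0}|\tau_{s_0})\,p_\theta(\tau_{>0}|\tau_{s_0},\tau_{a_0}),
\]
with $\pi_\theta(a|s)$ equal to the $(s,a)$-marginal of $p_\theta$ after integrating out $\tau_{>0}$, normalized over $a$.

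\textbf{Step 2 (chain rule).} Applying the KL chain rule gives
\[
D_{\text{KL}}(q\parallel p_\theta)=D_{\text{KL}}(q(s)\parallel p_\theta(s))+\mathbb{E}_{s\sim q}D_{\text{KL}}(\pi_{\mathcal{D}_{\text{off}}}(\cdot|s)\parallel\pi_\theta(\cdot|s))+\mathbb{E}_{(s,a)\sim q}D_{\text{KL}}(q(\tau_{>0}|s,a)\parallel p_\theta(\tau_{>0}|s,a)).
\]
The first and third terms are nonnegative, so dropping them yields (1). Here the right-hand side of (1) is read as the expected conditional KL under the offline state distribution, which is how the statement's $D_{\text{KL}}(\pi_{\mathcal{D}_{\text{off}}}\parallel\pi_\theta)$ should be interpreted. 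I would also recall that $\mathcal{L}_{\text{VLB}}$ in Eq.~\ref{eq:diff_obj} is, up to weighting and constants, the negative ELBO. It therefore upper-bounds $-\mathbb{E}_q\log p_\theta(\tau^0)=D_{\text{KL}}(q\parallel p_\theta)+H(q)$, and since $H(q)$ does not depend on $\theta$, minimizing Eq.~\ref{eq:diff_obj} minimizes an upper bound on the policy KL.

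\textbf{Step 3 (part (2)).} Let $\theta^*$ attain $D_{\text{KL}}(q\parallel p_{\theta^*})=0$. By (1), the policy KL at $\theta^*$ is at most $0$, and it is nonnegative, so it equals $0$, its global minimum. Hence every minimizer of the trajectory KL is a minimizer of the policy KL, which gives the $\subseteq$ direction of the argmin identity. For the reverse inclusion, strictly speaking equality need not hold: a $\theta$ could match the policy while mismatching the state marginal or the future conditionals. I would argue it in the sense the statement intends, namely that the trajectory objective's optimum is attained at a policy-optimal point, or restrict to the parameterization in which the policy is tied to the planner.

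\textbf{Main obstacle.} The main obstacle is this reverse inclusion in (2), together with the rigorous identification of the ELBO in Eq.~\ref{eq:diff_obj} with the KL, since the simplified $\epsilon$-loss drops the per-step weights. I would state the weighted-ELBO version and note that the simplified loss shares its minimizer under sufficient expressiveness.
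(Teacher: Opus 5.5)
Your proposal is correct and is essentially the paper's proof: the appendix likewise applies the KL chain rule to split $D_{\text{KL}}(q(\tau^0)\parallel p_\theta(\tau^0))$ into the $\tau_{>0}$-conditional, $\tau_{s_0}$-marginal and first-action-conditional terms and drops the nonnegative ones for (1). For (2) it argues that $q=p^*_\theta$ a.e.\ and marginalizes to get $p^*_\theta(\tau_{a_0}^0\mid\tau_{s_0}^0)=q(\tau_{a_0}^0\mid\tau_{s_0}^0)$, which is exactly what your one-line sandwich from (1) gives. Your caveat about the reverse inclusion is well founded: the paper also only shows that trajectory-KL minimizers are policy-KL minimizers and then asserts equality of the argmin sets. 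The $\supseteq$ direction genuinely fails in general, because a planner can match the first-action conditional while mismatching the state marginal or the future conditionals, even though $\pi_\theta$ is always derived from $p_\theta$. So only your first, inclusion-type reading of (2) is defensible.
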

Thm.~\ref{thm:policy_bc} shows that the objective function in Eq.~\ref{eq:diff_obj} helps the policy $\pi_\theta(a|s)$ close to the behavior policy (i.e., the policy that generated the static dataset) $\pi_{\mathcal{D}_{\text{off}}}(a|s)$ in offline RL. Hence, we only need to design an actor loss to maximize the estimated Q-value in the actor-critic framework in Sec.~\ref{sec:O2O-RL}, thus we use the policy gradient with the loss defined by:
\begin{align}\label{eq:policy_obj}
    \mathcal{L}_{\pi}(\theta) = -\mathbb{E}_{(s,a) \sim \mathcal{D}}[Q_{\phi}(s,a) \cdot \log\pi_\theta(a|s)] - \log p(\theta),
\end{align}
where $p(\theta)$ is a prior regularizer and is set to be uniformly distributed with density $1$ in our experiment. And, $\pi_\theta(a|s)= \mathcal{N}(\tau_{a_0}^0,\sigma_{\Bar{\theta}}^2)$ is parameterized as a Gaussian distribution, with the learnable mean $\tau_{a_0}^0$ obtained from Eq.~\ref{eq:policy_diff_act} and $\sigma_{\Bar{\theta}}^2$ is the corresponding learnable variance. Note that we can also extend to capture multimodal distribution by using a Gaussian Mixture Model for $\pi_\theta(a|s)$ in Eq.~\ref{eq:policy_obj}, though this will induce a higher computational training cost. From Eq.~\ref{eq:policy_obj}, we can see that it is a weighted maximum a-posteriori estimation (MAP) problem where samples are weighted by the critic $Q_\phi(s,a)$. This aims to optimize a policy by increasing the likelihood of actions leading to high rewards and decreasing it for actions that lead to low rewards. This objective is crucial and helps us design a proxy for a principled epistemic UQ for the policy $\pi_\theta$ in the following section.

\subsection{Estimate Epistemic Uncertainty for Diffusion Policy}\label{sec:method:trainUQ}
In online RL, the agent uses the policy's uncertainty to balance between trying new actions (exploration) to discover better rewards versus using known good actions (exploitation) for immediate gains~\citep{lattimore2017bandit}. Unfortunately, most policies are either deterministic or parametrized as a distribution with learned variance, only provide aleatoric (data) uncertainty~\citep{chua2018deepRL,bui2025varianceaware}, and are often overconfident under distribution shifts. Without epistemic (model) uncertainty, such overconfident models usually get stuck in suboptimal policies, thereby hindering exploration of the optimal policy in the online phase~\citep{osband23approximate,osband23epistemicNN,osband2016deepexplore}.

Therefore, to obtain the epistemic uncertainty of $\pi_\theta(a|s)$, from the Bayesian perspective, we need to seek the posterior distribution $p(\theta|\mathcal{D})$ of the weight $\theta$ from the replay buffer $\mathcal{D}$~\citep{kendall17whatUQ,gal16dropout,bellemare17DistributionalRL}. From Thm~\ref{thm:policy_diff}, since $\pi_\theta(a|s)$ is extracted from the distribution of diffusion trajectory $p_\theta(\tau)$, an approach may utilize existing diffusion UQ methods that employ the Laplace approximation with the ELBO loss in Eq.~\ref{eq:diff_obj} (e.g., BayesDiff~\citep{kou2024bayesdiff} or DiffUQ~\citep{jazbec2025diffuq}) to the last layer of the denoising network $\epsilon_{\theta}$. Unfortunately, this is not a principled epistemic UQ by:

\begin{remark}\label{rmk:ELBO}
    The Laplace approximation with the ELBO loss in Eq.~\ref{eq:diff_obj}, i.e. $p(\theta|\mathcal{D}) \approx \mathcal{N}(\hat{\theta}, \Sigma)$, where $\hat{\theta} = \arg\min_{\theta}\mathcal{L}_{\text{VLB}}(\theta)$ and $ \Sigma = \left(\nabla_\theta^2 \mathcal{L}_{\text{VLB}}(\theta,n)|_{\theta = \hat{\theta}}\right)^{-1}$ does not align well with the standard Laplace's approximation framework, because the ELBO loss $\mathcal{L}_{\text{VLB}}$ may not be interpreted as a log-likelihood distribution, causing the posterior distribution $p(\theta|\mathcal{D}) \not\approx \frac{\exp(-\mathcal{L}_{\text{VLB}}(\theta))p(\theta)}{\int_{\theta'}\exp(-\mathcal{L}_{\text{VLB}}(\theta'))p(\theta')d\theta'}$.
\end{remark}

From Rem.~\ref{rmk:ELBO}, we can also see that applying existing Diffusion UQ methods creates a misalignment in the objective, as $\mathcal{L}_{\text{VLB}}(\theta)$ does not explicitly account for downstream policy performance when approximating $p(\theta|\mathcal{D})$. This yields uninformative uncertainty estimates, degrading the UQ quality of the policy $\pi_\theta(a|s)$ needed for balancing exploration and exploitation. Therefore, to establish a principled epistemic UQ for policy $\pi_{\theta}$, such that the approximation objective aligns with the policy's decision-making process, we introduce our last-layer Laplace approximation as follows:
\begin{align}\label{eq:policy_laplace}
    p(\theta|\mathcal{D}) &\approx \mathcal{N}(\theta_{\text{MAP}}, \Sigma), \text{where}
    \begin{cases} 
    \theta_{\text{MAP}} = \arg\min_{\theta}\mathcal{L}_\pi(\theta),\\
    \Sigma = \left(\nabla_\theta^2 \mathcal{L}_\pi(\theta)|_{\theta = \theta_{\text{MAP}}}\right)^{-1}.
    \end{cases}
\end{align}

Taking advantage of the weighted actor loss in Eq.~\ref{eq:policy_obj}, the approximation in Eq.~\ref{eq:policy_laplace} maintains strict consistency with a Bayesian interpretation of the epistemic uncertainty inherent in the policy. Indeed, we next show that our proposed approximation is more standard by the following theorem:
\begin{theorem}\label{thm:eUQ_laplace}
    The Laplace approximation in Eq.~\ref{eq:policy_laplace} with the actor loss $\mathcal{L}_\pi(\theta)$ in Eq.~\ref{eq:policy_obj} applies standard Laplace's approximation framework and results in a weighted posterior distribution $p(\theta|\mathcal{D})\approx \frac{1}{Z} \exp(-\mathcal{L}_{\pi}(\theta))$ with the weighted marginal likelihood $Z=\exp \left(-\mathcal{L}_{\pi}(\theta_{\text{MAP}})\right)(2\pi)^{\frac{|\theta|}{2}}(\det(\Sigma))^{\frac{1}{2}}$. Proof is in Apd.~\ref{proof:thm:eUQ_laplace}.
\end{theorem}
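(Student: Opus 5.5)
The plan is to recast the actor loss in Eq.~\ref{eq:policy_obj} as a negative log unnormalized (weighted) posterior, and then carry out the textbook second-order Taylor argument around $\theta_{\text{MAP}}$. First I would write
\begin{align*}
\exp(-\mathcal{L}_\pi(\theta)) = p(\theta)\exp\Big(\mathbb{E}_{(s,a)\sim\mathcal{D}}\big[Q_\phi(s,a)\log\pi_\theta(a|s)\big]\Big).
\end{align*}
For the empirical buffer, the second factor equals $\prod_{(s,a)\in\mathcal{D}}\pi_\theta(a|s)^{w(s,a)}$ with weights $w(s,a)=Q_\phi(s,a)/|\mathcal{D}|\geq 0$; nonnegativity holds because $r\in[0,R_{\max}]$, so we may take $Q_\phi\geq 0$. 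This is a weighted (tempered) likelihood of the Gaussian policy $\pi_\theta(a|s)=\mathcal{N}(\tau^0_{a_0},\sigma^2_{\bar\theta})$. The point of contrast with Rem.~\ref{rmk:ELBO} is that $\mathcal{L}_\pi$ is, by construction, exactly a sum of log-likelihood terms plus a log prior. Hence the Gibbs density $\exp(-\mathcal{L}_\pi(\theta))/\int\exp(-\mathcal{L}_\pi(\theta'))\,d\theta'$ is, by Bayes' rule, precisely the weighted posterior $p(\theta|\mathcal{D})\propto p(\theta)\prod\pi_\theta(a|s)^{w}$, with no ELBO gap. This identification is the step that establishes that the standard Laplace framework applies.

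Next I would expand $\mathcal{L}_\pi$ around $\theta_{\text{MAP}}=\arg\min_\theta\mathcal{L}_\pi(\theta)$. Since it is an interior minimizer, the gradient vanishes. The Hessian $H=\nabla^2_\theta\mathcal{L}_\pi(\theta_{\text{MAP}})$ is positive semidefinite, and I would assume it is positive definite, which is possible for the last-layer parameters as in Eq.~\ref{eq:policy_laplace}. This gives
\begin{align*}
\mathcal{L}_\pi(\theta)\approx\mathcal{L}_\pi(\theta_{\text{MAP}})+\tfrac12(\theta-\theta_{\text{MAP}})^\top\Sigma^{-1}(\theta-\theta_{\text{MAP}}),
\end{align*}
where $\Sigma=H^{-1}$. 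Exponentiating yields $\exp(-\mathcal{L}_\pi(\theta))\approx\exp(-\mathcal{L}_\pi(\theta_{\text{MAP}}))\exp\big(-\tfrac12(\theta-\theta_{\text{MAP}})^\top\Sigma^{-1}(\theta-\theta_{\text{MAP}})\big)$. The uniform prior with density $1$ contributes only a constant, so $\log p(\theta)=0$ and it drops out of the Hessian.

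Finally I would integrate this Gaussian to obtain the normalizer
\begin{align*}
Z=\int\exp(-\mathcal{L}_\pi(\theta))\,d\theta\approx\exp(-\mathcal{L}_\pi(\theta_{\text{MAP}}))(2\pi)^{|\theta|/2}\det(\Sigma)^{1/2}.
\end{align*}
From this, $p(\theta|\mathcal{D})\approx\frac1Z\exp(-\mathcal{L}_\pi(\theta))\approx\mathcal{N}(\theta_{\text{MAP}},\Sigma)$, which matches Eq.~\ref{eq:policy_laplace}. Here $Z$ is interpreted as the weighted marginal likelihood $\int p(\theta)\prod\pi_\theta^{w}\,d\theta$.

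I expect the first step to be the main obstacle. Treating $Q_\phi$-weighted log-likelihoods as a genuine likelihood requires $Q_\phi\geq 0$, possibly after adding a constant shift, which does not change the maximizer. It also requires treating $Q_\phi$ as fixed with respect to $\theta$, which holds because the critic is held fixed in the policy-improvement step. Finally, the weighted (power) likelihood must be accepted as defining a generalized Bayesian posterior. Once these are stated explicitly, the remainder is the routine Laplace calculation.
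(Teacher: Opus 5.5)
Your proposal is correct and follows essentially the same route as the paper's proof: read $\mathcal{L}_\pi$ as a $Q_\phi$-weighted log-likelihood plus log-prior, identify $\exp(-\mathcal{L}_\pi(\theta))/Z$ with the weighted posterior via Bayes' rule, expand to second order at $\theta_{\text{MAP}}$ where the gradient vanishes, and evaluate $Z$ as a Gaussian integral; your explicit weights $Q_\phi(s,a)/|\mathcal{D}|$ even make the identification exact, whereas the paper replaces the expectation by an unnormalized sum under an ``$\approx$''. One correction to your closing remark: shifting $Q_\phi$ by a constant $c$ does \emph{not} leave the optimizer unchanged here, because the expectation is over the buffer rather than over $a\sim\pi_\theta$, so the shift adds $-c\,\mathbb{E}_{(s,a)\sim\mathcal{D}}[\log\pi_\theta(a|s)]$ to the loss and in general moves $\theta_{\text{MAP}}$; instead simply assume $Q_\phi\geq 0$, noting that the Laplace computation itself does not need the sign and only the likelihood interpretation does.
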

\vspace{0.5ex}
\noindent \textit{\textbf{\underline{Note}}:
We can derive the weighted posterior in Thm.~\ref{thm:eUQ_laplace} because $\mathcal{L}_\pi(\theta)$ in Eq.~\ref{eq:policy_obj} is an off-policy loss, i.e., the posterior probability conditioned solely on the replay buffer $\mathcal{D}$~\citep{abdolmaleki2018maximum}. In our experiments, beyond off-policy loss in Eq.~\ref{eq:policy_obj} with IQL~\citep{kostrikov2022offline}, we also extend our framework to other policy improvement variants, including SAC-style actor loss, i.e., $-\mathbb{E}_{s \sim \mathcal{D}, a\sim \pi}[Q_{\phi}(s,a) - \log\pi_\theta(a|s)]$ in Cal-QL~\citep{nakamoto2023calql}, and OFF2ON actor loss with on-policy samples in SO2+OFF2ON~\citep{lee2021offline,zhang2024perspective}.}

Thm.~\ref{thm:eUQ_laplace} confirms our proposed approximation in Eq.~\ref{eq:policy_laplace} can provide accurate weighted posterior distribution $p(\theta|\mathcal{D})$ for the weight $\theta$. We next show how to use this principled posterior for policy sampling with epistemic uncertainty to balance exploration and exploitation in online RL.

\begin{figure}[t!]
\vspace{-0.15in}
\begin{algorithm}[H]
    \centering
    \caption{\small{DUAL in O2O-RL. Our proposed and optional steps are highlighted in \textcolor{cyan}{cyan} and \textcolor{gray}{gray}, respectively.}}\label{alg:episodic}
    \begin{algorithmic}[1]
        \small
       \STATE Initialize the noise prediction of diffusion planner model $g_{\theta'}$, value functions $Q_{\phi}$, offline replay buffer $\mathcal{D}_{\text{offline}}$, online replay buffer $\mathcal{D}_{\text{online}}$.
       \STATE \textbf{Offline phase:}
       \STATE Pre-train diffusion planner $g_{\theta'}$ from $\mathcal{D}_{\text{offline}}$:
       \STATE $\quad {\theta'}\leftarrow {\theta'} - \lambda_g \nabla_{\theta'} \mathcal{L}_{\text{VLB}}(\theta;\mathcal{D}_{\text{offline}})$
       \STATE \textcolor{cyan}{Distillate diffusion planner $g_{\theta'}$ to get $\pi_{\theta}$, $p(s'|s,a;\theta)$}
       \STATE Pre-train policy actor $\pi_{\theta}$ and critic $Q_\phi$ from $\mathcal{D}_{\text{offline}}$: 
       \STATE $\quad \phi \leftarrow \phi - \lambda_Q\nabla_\phi \mathcal{L}_{\mathcal{D}_{\text{offline}}}^Q(\phi)$
       \STATE \textcolor{cyan}{$\quad \theta \leftarrow \theta - \lambda_\pi \nabla_\theta \mathcal{L}_{\mathcal{D}_{\text{offline}}}^{\pi}(\theta)$}
       \STATE \textcolor{cyan}{Run Laplace approximation for $\pi_{\theta}$ from $\mathcal{L}_{\mathcal{D}_{\text{offline}}}^{\pi}(\theta)$}
       \STATE \textbf{Online phase:}
       \WHILE{in \emph{online training phase}}
           \STATE \textcolor{cyan}{Sample $a_t \sim \frac{1}{M} \sum_{m=1}^M \pi_{\theta_m}(\cdot|s_t) + \lambda \mathcal{N}(0,||s_{t} - p_\theta(\hat{s}_t|s_{t-1},a_{t-1})||)$}
           \STATE Receive reward $r_t$ and new state $s_{t+1}$
           \STATE $\mathcal{D}_{\text{online}} \leftarrow \mathcal{D}_{\text{online}} \cup \{(s_t, a_t, s_{t+1}, r_t)\}$
           \IF{step meets update frequency}
           \STATE{\textcolor{gray}{Add generated samples from $g_\theta$ to $\mathcal{D}_{\text{online}}$ (e.g., EDIS)}}
           \FOR{each gradient step}
               \STATE Sample data from $\mathcal{D} = \mathcal{D}_{\text{online}}\cup \mathcal{D}_{\text{offline}}$
               \STATE $\phi \leftarrow \phi - \lambda_Q\nabla_\phi \mathcal{L}_\mathcal{D}^Q(\phi)$
               \STATE \textcolor{cyan}{$\theta\leftarrow \theta - \lambda_\pi \nabla_\theta \mathcal{L}_\mathcal{D}^{\pi}(\theta)$}
           \ENDFOR
           \STATE \textcolor{cyan}{Run Laplace approximation $\pi_{\theta}$ from $\mathcal{L}_{\mathcal{D}}^{\pi}(\theta)$}
           \ENDIF
       \ENDWHILE
       \normalsize
    \end{algorithmic}
\end{algorithm}
\vspace{-0.3in}
\end{figure}
\begin{table*}[b!]
\caption{\small{Average online expected returns on MuJoCo and AntMaze environments after $1M$ time steps from Fig.~\ref{fig:mujoco} and Fig.~\ref{fig:antmaze}. Each result is the average score over five random seeds with $\pm$ standard deviation. Best scores with the significant test ($\alpha = 0.05$) are marked in \textbf{bold}.}}
\centering
\scalebox{0.91}{
\begin{tabular}{@{}l|cccccc@{}}
\toprule
\textbf{Dataset} & IQL & EDIS & Diffuser & Diff-QL & DACER & DUAL (Ours) \\
\midrule
halfcheetah-medium-v2 & 48.35 \footnotesize{$\pm$ 1.19} & 49.34 \footnotesize{$\pm$ 1.39} & 48.20 \footnotesize{$\pm$ 0.92} & 48.61 \footnotesize{$\pm$ 1.37} & 49.92 \footnotesize{$\pm$ 0.89} & 49.83 \footnotesize{$\pm$ 1.09}\\
halfcheetah-medium-replay-v2 & 45.77 \footnotesize{$\pm$ 1.24} & 46.65 \footnotesize{$\pm$ 1.38} & 45.82 \footnotesize{$\pm$ 1.87} & 46.91 \footnotesize{$\pm$ 1.70} & \textbf{48.18} \footnotesize{$\pm$ 1.00} & \textbf{48.78} \footnotesize{$\pm$ 1.30}\\
halfcheetah-medium-expert-v2 & 95.77 \footnotesize{$\pm$ 1.52} & 98.24 \footnotesize{$\pm$ 1.91} & 96.60 \footnotesize{$\pm$ 1.91} & 99.88 \footnotesize{$\pm$ 1.79} & \textbf{102.18} \footnotesize{$\pm$ 1.37} & \textbf{103.11} \footnotesize{$\pm$ 1.75}\\
hopper-medium-v2 & 66.61 \footnotesize{$\pm$ 2.68} & 68.90 \footnotesize{$\pm$ 2.71} & 70.43 \footnotesize{$\pm$ 2.27} & 80.26 \footnotesize{$\pm$ 2.69} & 83.05 \footnotesize{$\pm$ 2.29} & \textbf{88.23} \footnotesize{$\pm$ 3.42}\\
hopper-medium-replay-v2 & 101.01 \footnotesize{$\pm$ 2.13} & 103.55 \footnotesize{$\pm$ 2.46} & 96.57 \footnotesize{$\pm$ 3.55} & \textbf{108.27} \footnotesize{$\pm$ 2.42} & \textbf{110.06} \footnotesize{$\pm$ 2.01} & \textbf{109.21} \footnotesize{$\pm$ 2.41}\\
hopper-medium-expert-v2 & 107.48 \footnotesize{$\pm$ 3.29} & 109.93 \footnotesize{$\pm$ 3.41} & 106.73 \footnotesize{$\pm$ 3.33} & 110.94 \footnotesize{$\pm$ 4.16} & \textbf{113.88} \footnotesize{$\pm$ 3.57} & \textbf{115.33} \footnotesize{$\pm$ 4.58}\\
walker2d-medium-v2 & 84.20 \footnotesize{$\pm$ 1.61} & 86.11 \footnotesize{$\pm$ 1.81} & 83.57 \footnotesize{$\pm$ 1.77} & 86.14 \footnotesize{$\pm$ 1.74} & \textbf{90.05} \footnotesize{$\pm$ 1.39} & \textbf{89.15} \footnotesize{$\pm$ 1.64}\\
walker2d-medium-replay-v2 & 84.04 \footnotesize{$\pm$ 1.67} & 89.53 \footnotesize{$\pm$ 1.85} & 85.51 \footnotesize{$\pm$ 2.13} & 87.88 \footnotesize{$\pm$ 1.80} & 90.83 \footnotesize{$\pm$ 1.44} & \textbf{93.15} \footnotesize{$\pm$ 1.86}\\
walker2d-medium-expert-v2 & 111.52 \footnotesize{$\pm$ 1.28} & 111.02 \footnotesize{$\pm$ 1.31} & 111.87 \footnotesize{$\pm$ 0.83} & \textbf{112.48} \footnotesize{$\pm$ 0.92} & \textbf{112.43} \footnotesize{$\pm$ 0.96} & \textbf{112.62} \footnotesize{$\pm$ 1.18}\\
\midrule
\textbf{locomotion average} & 82.7500 & 84.8078 & 82.8111 & 86.8189 & 88.9533 & \textbf{89.9344}\\
\bottomrule
antmaze-umaze-v2 & 90.38 \footnotesize{$\pm$ 2.24} & 90.10 \footnotesize{$\pm$ 1.20} & 89.75 \footnotesize{$\pm$ 1.24} & 90.91 \footnotesize{$\pm$ 1.45} & 91.31 \footnotesize{$\pm$ 1.82} & \textbf{94.69} \footnotesize{$\pm$ 4.00}\\
antmaze-umaze-diverse-v2 & 35.95 \footnotesize{$\pm$ 15.16} & 35.82 \footnotesize{$\pm$ 11.62} & 35.71 \footnotesize{$\pm$ 10.36} & 49.37 \footnotesize{$\pm$ 10.78} & 51.73 \footnotesize{$\pm$ 7.94} & \textbf{78.06} \footnotesize{$\pm$ 9.03}\\
antmaze-medium-play-v2 & 85.03 \footnotesize{$\pm$ 2.63} & 84.15 \footnotesize{$\pm$ 2.00} & 86.45 \footnotesize{$\pm$ 1.70} & 87.44 \footnotesize{$\pm$ 1.77} & 87.46 \footnotesize{$\pm$ 1.18} & \textbf{92.29} \footnotesize{$\pm$ 2.40}\\
antmaze-medium-diverse-v2 & 84.10 \footnotesize{$\pm$ 2.25} & 83.55 \footnotesize{$\pm$ 2.80} & 86.58 \footnotesize{$\pm$ 1.84} & 86.75 \footnotesize{$\pm$ 1.52} & 86.93 \footnotesize{$\pm$ 1.68} & \textbf{91.86} \footnotesize{$\pm$ 2.76}\\
antmaze-large-play-v2 & 54.16 \footnotesize{$\pm$ 4.00} & 56.80 \footnotesize{$\pm$ 3.70} & 59.21 \footnotesize{$\pm$ 2.32} & 58.40 \footnotesize{$\pm$ 3.65} & 60.50 \footnotesize{$\pm$ 3.25} & \textbf{70.27} \footnotesize{$\pm$ 4.72}\\
antmaze-large-diverse-v2 & 50.21 \footnotesize{$\pm$ 3.83} & 53.57 \footnotesize{$\pm$ 5.02} & 55.64 \footnotesize{$\pm$ 3.05} & 51.13 \footnotesize{$\pm$ 3.55} & 52.30 \footnotesize{$\pm$ 4.47} & \textbf{64.21} \footnotesize{$\pm$ 4.63}\\
\midrule
\textbf{antmaze average} & 66.6383 & 67.3317 & 68.8900 & 70.6667 & 71.7050 & \textbf{81.8967}\\
\bottomrule
\end{tabular}}
\label{tab:performancemain}
\end{table*}

\subsection{Policy sampling: Epistemic UQ \& Shift-awareness}\label{sec:method:inferUQ}
\textbf{Epistemic UQ}: Given the weighted posterior distribution $p(\theta|\mathcal{D})$, the epistemic uncertainty of policy $\pi_\theta(a|s)$ can then be defined as the variability of the posterior predictive: $\mathcal{V}(\pi_\theta(a|s))$, where $\mathcal{V}(\cdot)$ denotes the variability measure, such as variance, entropy, etc. Recall that from Thm~\ref{thm:policy_diff}, the policy can be inferred from the diffusion planner by $\pi_\theta(a|s)\propto\int p_\theta(\tau,\tau_{s_0}=s, \tau_{a_0}=a)d\tau$, hence, to obtain $\mathcal{V}(\pi_\theta(a|s))$ from $p(\theta|\mathcal{D})$, we can apply Monte-Carlo sampling to maintain $M$ sub-policies $\{ \pi_{\theta_1}, \pi_{\theta_2},\ldots,\pi_{\theta_M}\}$, where $\pi_{\theta_m}(a|s)\propto\int p_{\theta_m}(\tau,\tau_{s_0}=s, \tau_{a_0}=a)d\tau$, $\theta_m \sim p(\theta|D)$ for every $m \in [M]$. Then, the final ensemble policy $\hat{\pi}$ can be derived through mean and variance aggregation over the sub-policies as follows:
\begin{equation}\label{eq:meanpolicy}
    \begin{aligned}
    &\hat{\pi}_\theta(\cdot|s) = \frac{1}{M}\sum_{m=1}^{M}\pi_{\theta_m}(\cdot|s) = \mathcal{N}(\mu_\theta(s), \sigma_\theta^2(s)),\\ 
    &\text{where } \begin{cases} 
    \mu_\theta(s)=\frac{1}{M}\sum_{m=1}^M (\tau_{a_0}^0)_m,\\
    \sigma_\theta^2(s) = \frac{1}{M}\sum_{m=1}^M \left[\left(\mu_\theta(s) - (\tau_{a_0}^0)_m\right)^2 + 
    \sigma_{\Bar{\theta}_m}^2\right],
    \end{cases}
    \end{aligned}
\end{equation}
where $(\tau)_m \sim p_{\theta_m}(\tau|\tau_{s_0}=s)$ for every $m\in [M]$. With the ensemble policy $\hat{\pi}$ in Eq.~\ref{eq:meanpolicy}, when interacting with the environment, the agent can simply sample the action $a \sim \hat{\pi}_\theta(\cdot|s)$. Therefore, we can guarantee the action $a$ is sampled by the epistemic uncertainty of policy $\pi_\theta(a|s)$ by the variance measure $\mathcal{V}(\pi_\theta(a|s)) = \sigma^2_\theta(s)$ in Eq.~\ref{eq:meanpolicy}.

\textbf{Shift-awareness:} While literature on O2O-RL focuses on the distribution shift caused by the difference of offline and online policy~\citep{wang2023trainonce}, it can be the case that the transition-state distribution $\mathbb{P}(s'|s,a)$ is different~\citep{jin2018isQ,lyu2024odrl,bui2025qlearningshiftawareupperconfidence}. E.g., in robot navigation on Frozen-Lake, an offline dataset can be collected from a certain slippery level in $\mathbb{P}(s'|s,a)$. Yet, due to weather conditions, the slippery levels can subsequently change in the online phase. Therefore, to further enhance the policy's UQ quality under the transition shift, we propose an additional shift-awareness term for policy sampling by:
\begin{align}\label{eq:shiftaware}
    \hat{\pi}_\theta(\cdot|s) = \mathcal{N}\left(\mu_\theta(s), \sigma_\theta^2(s) + \lambda|\hat{p}(s'|s,a)-s'|\right),
\end{align}
where $\lambda$ is the weight hyper-parameter (ablation study with $\lambda$ is in Apd.~\ref{apd:additional_results}), and $\hat{p}(s'|s,a)$ is the estimated transition model. It is worth noticing that, based on the derivation in the proof of Thm.~\ref{thm:policy_diff}, we can extract $\hat{p}(s'|s,a)$ from diffusion planner for free by $p(s'|s,a)\propto\int p_\theta(\tau,\tau_{s_0}=s, \tau_{a_0}=a)d\tau$. From Eq.~\ref{eq:shiftaware}, we can see that the second term of variance, i.e., $|\hat{p}(s'|s,a)-s'|$, measures the difference between the estimated state and the true state. Therefore, when this value is high, $\hat{\pi}_\theta(s)$ will be more uncertain (exploration), and when there is no transition shift, this value is small, yielding $\hat{\pi}_\theta(\cdot|s)$ more certain (exploitation). As a result, policy sampling in Eq.~\ref{eq:shiftaware} can provide both epistemic UQ and shift-awareness ability to balance exploration and exploitation in O2O-RL.
\section{Experiments}
\begin{table*}[t!]
\caption{\small{Average online expected return on MuJoCo and off-dynamic (ODRL) environment after $1M$ time steps from Fig.~\ref{fig:antmaze}. Each result is the average score over five random seeds with $\pm$ standard deviation. Best scores with the significant test ($\alpha = 0.05$) are marked in \textbf{bold}.}}
\centering
\scalebox{0.75}{
\begin{tabular}{@{}l|cccc|cccc@{}}
\toprule
\multirow{2}{*}[-0.5ex]{\textbf{Dataset}} & \multicolumn{4}{c}{Cal-QL} & \multicolumn{4}{c}{SO2+OFF2ON} \\
\cmidrule(r){2-5} \cmidrule(l){6-9} 
& Base & EDIS & Diff-QL & DUAL (Ours) & Base & Diff-QL & DACER & DUAL (Ours) \\
\midrule
halfcheetah-medium-v2 & 74.14 \footnotesize{$\pm$ 1.39} & 80.18 \footnotesize{$\pm$ 1.55} & 80.37 \footnotesize{$\pm$ 1.28} & \textbf{84.61} \footnotesize{$\pm$ 1.86} & 77.72 \footnotesize{$\pm$ 1.01} & 79.48 \footnotesize{$\pm$ 1.43} & \textbf{80.00} \footnotesize{$\pm$ 0.77} & \textbf{81.78} \footnotesize{$\pm$ 1.11}\\
halfcheetah-medium-replay-v2 & 75.20 \footnotesize{$\pm$ 0.95} & 82.16 \footnotesize{$\pm$ 1.28} & 84.74 \footnotesize{$\pm$ 1.21} & \textbf{86.99} \footnotesize{$\pm$ 2.00} & 68.70 \footnotesize{$\pm$ 0.85} & 70.18 \footnotesize{$\pm$ 1.25} & 72.50 \footnotesize{$\pm$ 0.77} & \textbf{74.42} \footnotesize{$\pm$ 1.20}\\
hopper-medium-v2 & 98.21 \footnotesize{$\pm$ 1.93} & 104.51 \footnotesize{$\pm$ 2.30} & 102.04 \footnotesize{$\pm$ 1.33} & \textbf{107.95} \footnotesize{$\pm$ 1.77} & 88.84 \footnotesize{$\pm$ 2.33} & 94.30 \footnotesize{$\pm$ 1.73} & 96.97 \footnotesize{$\pm$ 1.53} & \textbf{100.78} \footnotesize{$\pm$ 1.59}\\
walker2d-medium-v2 & 98.77 \footnotesize{$\pm$ 1.81} & 103.54 \footnotesize{$\pm$ 2.06} & 106.80 \footnotesize{$\pm$ 1.53} & \textbf{109.10} \footnotesize{$\pm$ 1.93} & 88.87 \footnotesize{$\pm$ 0.53} & 91.02 \footnotesize{$\pm$ 1.37} & 92.38 \footnotesize{$\pm$ 0.46} & \textbf{94.45} \footnotesize{$\pm$ 0.58}\\
\midrule
\textbf{D4RL locomotion average} & 86.5800 & 92.5975 & 93.4875 & \textbf{97.1625} & 81.0325 & 83.7450 & 85.4600 & \textbf{87.8575}\\
\bottomrule
halfcheetah-friction-2.0 & 61.20 \footnotesize{$\pm$ 1.19}  & 62.65 \footnotesize{$\pm$ 0.51} & 62.68 \footnotesize{$\pm$ 1.31} & \textbf{67.16} \footnotesize{$\pm$ 1.02} & 56.60 \footnotesize{$\pm$ 0.90} & 57.35 \footnotesize{$\pm$ 1.25} & 60.21 \footnotesize{$\pm$ 1.29} & \textbf{61.94} \footnotesize{$\pm$ 0.62}\\
halfcheetah-med-gravity-2.0 & 44.98 \footnotesize{$\pm$ 1.08} & 46.19 \footnotesize{$\pm$ 0.65} & 47.48 \footnotesize{$\pm$ 1.63} & \textbf{53.41} \footnotesize{$\pm$ 1.10} & 42.52 \footnotesize{$\pm$ 2.61} & 45.02 \footnotesize{$\pm$ 2.66} & 47.93 \footnotesize{$\pm$ 2.72} & \textbf{49.97} \footnotesize{$\pm$ 1.49} \\
hopper-kinematic-2.0 & 55.46 \footnotesize{$\pm$ 3.01} & 58.01 \footnotesize{$\pm$ 2.44} & 62.93 \footnotesize{$\pm$ 2.09} & \textbf{68.70} \footnotesize{$\pm$ 1.86} & 51.63 \footnotesize{$\pm$ 5.95} & 59.15 \footnotesize{$\pm$ 3.43} & 61.55 \footnotesize{$\pm$ 3.64} & \textbf{63.00} \footnotesize{$\pm$ 2.20} \\
walker2d-morphology-2.0 & 52.70 \footnotesize{$\pm$ 1.70} & 55.17 \footnotesize{$\pm$ 0.59} & 56.93 \footnotesize{$\pm$ 2.37} & \textbf{61.22} \footnotesize{$\pm$ 0.92} & 48.96 \footnotesize{$\pm$ 4.01} & 53.01 \footnotesize{$\pm$ 4.03} & 56.23 \footnotesize{$\pm$ 4.25} & \textbf{58.63} \footnotesize{$\pm$ 2.51} \\
\midrule
\textbf{ODRL locomotion average} & 53.5850 & 55.5050 & 57.5050 & \textbf{62.6225} & 49.9275 & 53.6325 & 56.4800 & \textbf{58.3850} \\
\bottomrule
\end{tabular}}
\label{tab:performancemain3}
\end{table*}

\begin{table*}[t!]
\caption{\small{Average online expected return on Adroit environment after $1M$ time steps from Fig.~\ref{fig:antmaze}. Each result is the average score over five random seeds with $\pm$ standard deviation. Best scores with the significant test ($\alpha = 0.05$) are marked in \textbf{bold}.}}
\centering
\scalebox{0.8}{
\begin{tabular}{@{}l|cccc|cccc@{}}
\toprule
\multirow{2}{*}[-0.5ex]{\textbf{Dataset}} & \multicolumn{4}{c}{IQL} & \multicolumn{4}{c}{Cal-QL} \\
\cmidrule(r){2-5} \cmidrule(l){6-9} 
& Base & EDIS & Diff-QL & DUAL (Ours) & Base & EDIS & Diff-QL & DUAL (Ours)\\
\midrule
pen-cloned-v1 & 94.74 \footnotesize{$\pm$ 12.12} & 97.31 \footnotesize{$\pm$ 12.73} & 104.45 \footnotesize{$\pm$ 12.14} & \textbf{124.40} \footnotesize{$\pm$ 19.09} & -2.69 \footnotesize{$\pm$ 0.14} & -2.23 \footnotesize{$\pm$ 0.30} & -1.93 \footnotesize{$\pm$ 0.47} & \textbf{-1.09} \footnotesize{$\pm$ 0.35}\\
door-cloned-v1 & 9.79 \footnotesize{$\pm$ 4.98} & 10.46 \footnotesize{$\pm$ 5.10} & 11.55 \footnotesize{$\pm$ 4.93} & \textbf{15.26} \footnotesize{$\pm$ 4.86} & -0.33 \footnotesize{$\pm$ 0.01} & -0.32 \footnotesize{$\pm$ 0.01} & -0.31 \footnotesize{$\pm$ 0.01} & \textbf{-0.28} \footnotesize{$\pm$ 0.01}\\
hammer-cloned-v1 & 27.41 \footnotesize{$\pm$ 13.36} & 28.00 \footnotesize{$\pm$ 13.42} & 33.82 \footnotesize{$\pm$ 12.10} & \textbf{46.74} \footnotesize{$\pm$ 9.85} & 0.26 \footnotesize{$\pm$ 0.28} & 0.35 \footnotesize{$\pm$ 0.31} & 0.32 \footnotesize{$\pm$ 0.21} & \textbf{0.68} \footnotesize{$\pm$ 0.23}\\
relocate-cloned-v1 & 0.10 \footnotesize{$\pm$ 0.05} & 0.14 \footnotesize{$\pm$ 0.06} & 0.23 \footnotesize{$\pm$ 0.05} & \textbf{0.44} \footnotesize{$\pm$ 0.06} & -0.28 \footnotesize{$\pm$ 0.08} & -0.24 \footnotesize{$\pm$ 0.08} & -0.26 \footnotesize{$\pm$ 0.02} & \textbf{-0.12} \footnotesize{$\pm$ 0.08}\\
\midrule
\textbf{adroit average} & 33.0100 & 33.9775 & 37.5125 & \textbf{46.7100} & -0.7600 & -0.6100 & -0.5450 & \textbf{-0.2025}\\
\bottomrule
\end{tabular}}
\label{tab:performancemain2}
\end{table*}

\subsection{Benchmark comparison}
We compare our method (i.e., DUAL) with diffusion-based RL baselines in the O2O-RL setting. Specifically, EDIS~\citep{liu2024edis} uses diffusion as a data-synthesizer with an energy guider for actor-critic in online-finetuning; Diffuser~\citep{janner22Planning} uses a diffusion model for planning with a separate return guider; Diff-QL~\citep{wang2023diffusion} utilizes a conditional diffusion to represent the actor policy in the actor-critic framework; DACER~\citep{wang2024DACER} is an extension of Diff-QL by adding an entropy regularizer to quantify UQ to balance exploration exploitation in the online phase. Details are in Apd.~\ref{apd:experimental_settings}.

Tab.~\ref{tab:performancemain} shows the average expected online return on MuJoCo locomotion and AntMaze environments with D4RL dataset. First, we observe that using a diffusion policy within the actor-critic framework can yield significant improvements. For example, in MuJoCo, where the goal is to control agents (e.g., humanoids, quadrupeds) to stand stably or run faster, diffusion policy methods such as Diff-QL and DACER achieve returns exceeding $86$. Notably, our method leverages the diffusion policy with a high-quality epistemic UQ, which can balance the exploration and exploitation, resulting in an improvement of more than $89.9$ in expected return in the online phase. Second, by extracting the policy from the prior knowledge of the diffusion planner, our policy outperforms significantly other methods by more than $10$ return in long-term planning tasks like AntMaze, where the task's goal is to train a four-legged ant agent to navigate across different 2D mazes to reach a target location.

Next, we combine DUAL with two other base models with different critic Q-functions, including Cal-QL~\citep{nakamoto2023calql} and SO2+OFF2ON~\citep{zhang2024perspective}. Tab.~\ref{tab:performancemain3} summarizes the results with MuJoCo under two settings, including D4RL and off-dynamic RL (i.e., ODRL). Overall, we observe a consistent and significant improvement of DUAL across different tasks, especially in off-dynamic settings, where the transition state (e.g., friction, gravity, kinematic, and morphology) distributions significantly change the robot motion characteristics between the offline and online phases. It is worth noticing that while Cal-QL uses SAC-style actor loss, SO2+OFF2ON uses near OFF2ON actor loss with on-policy samples~\citep{lee2021offline}. Hence, the result in Tab.~\ref{tab:performancemain3} also suggests that beyond off-policy improvement loss, our framework can also empirically extend and bring out benefits for other on-policy variants.

In Tab.~\ref{tab:performancemain2}, we continue to extend our experiments to different datasets with the Adroit task. Similar to AntMaze, Adroit is an environment that is more closely related to the lookahead planning and requires intricate planning (e.g., opening a door with a latch, using a hammer to hit a nail, repositioning a pen in the hand, or moving a ball to a target position). Hence, we observe that our method can bring out a significant improvement over other diffusion-based methods. It is also worth noticing that this result is consistent in both IQL and Cal-QL critic baselines.

\subsection{Computational efficiency}

\begin{figure}[ht!]
    \centering
    \includegraphics[width=1.0\linewidth]{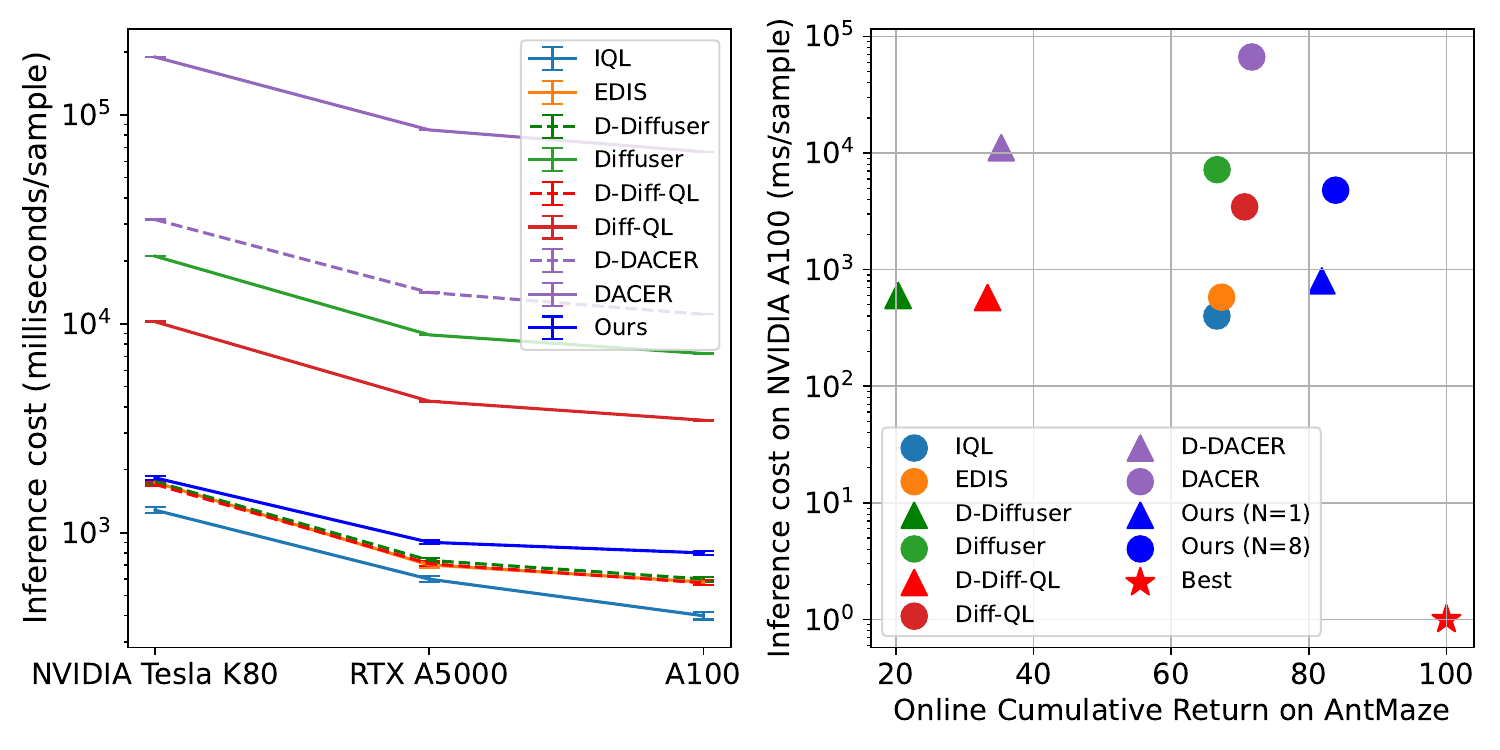}
    \caption{\small{(a) Latency of policy sampling across different GPUs; (b) 2-D visualizations regarding online expected return (x-axis) \& inference cost (y-axis).}}
    \label{fig:latency}
\end{figure}

\begin{figure*}[t!]
    \centering
    \includegraphics[width=1.0\linewidth]{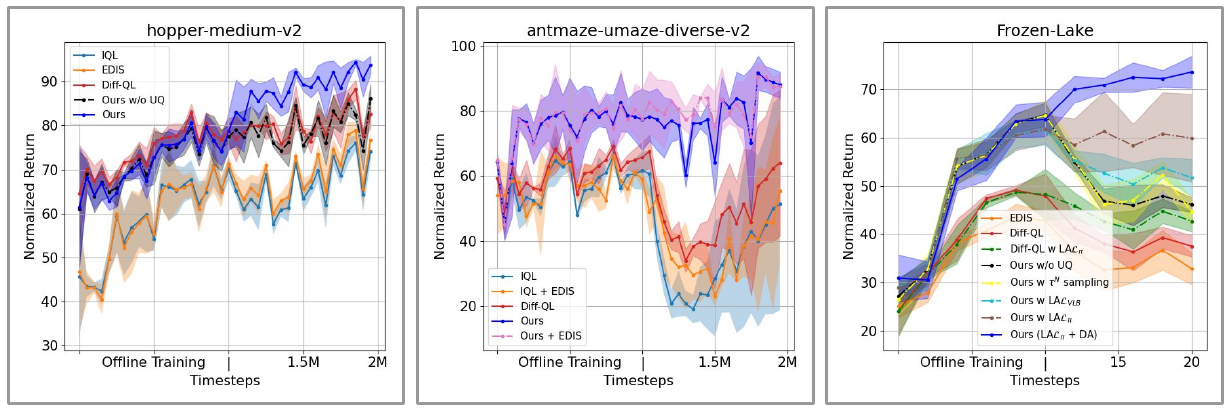}
    \caption{\small{Normalized return in test set across $1M$ timesteps in offline training and $1M$ online finetuning phase in (a) MuJoCo Locomotion (hopper-medium-v2); (b) AntMaze (antmaze-umaze-diverse-v2); (c) Frozen-Lake (slipper level change between offline and online phase).}}
    \label{fig:ablation}
\end{figure*}

Fast inference is a necessary condition for an efficient diffusion model, and is crucial in real-time online RL. Fig.~\ref{fig:latency}~(a) compares the inference cost with different denoising time steps. We observe that our method, with a denoising timestep $N=1$ and planning horizon $H=8$, is much faster than other diffusion-based baselines in the benchmarking comparison section. This is because Diffuser and Diff-QL require $N=20$ and $N=8$ denoising time steps, respectively. In addition, our method is only slightly slower than their progressive distillation version with $N=1$ (i.e., D-Diffuser and D-Diff-QL) by requiring MC sampling to get epistemic UQ. Although D-Diffuser and D-Diff-QL are computationally efficient, their expected return degrades considerably in Fig.~\ref{fig:latency}~(b). Regarding DACER, it is much slower than other baselines, even with the distillation version, i.e., D-DACER. This is because D-DACER requires hundreds of action samples for UQ with an entropy regularizer. Finally, we compare the gap between computational efficiency and expected return in our method. We observe that when the denoising steps increase from $N=1$ to $N=8$, the expected return improves from around $81.8$ to $84.0$. Yet, the latency also increases by around $6$ times. Although our method also suffers from the denoising time step trade-off, we observe that the efficiency and return gap are still smaller than other diffusion-based baselines in Fig.~\ref{fig:latency}~(b).

\subsection{Ablation studies}
\begin{table}[ht!]
    \centering
    \caption{\small{Testing causal effect of each component in DUAL, including (i) planner-prior extraction itself, (ii) the distilled/efficient actor parameterization, (iii) uncertainty-aware sampling, and (iv) shift-aware adaptation}. Evaluations on D4RL hopper-medium and ODRL hopper-friction environments.}
    \scalebox{0.75}{
    \begin{tabular}{c|ccc}
        \toprule
         Method & hopper-medium & hopper-friction & Latency (ms/s)\\
         \midrule
         w/o planner & 82.08 \footnotesize{$\pm$ 3.81} & 35.41 \footnotesize{$\pm$ 3.44} & 777.22 \footnotesize{$\pm$ 16}\\
         w/o distillation & 92.31 \footnotesize{$\pm$ 3.26} & 53.26 \footnotesize{$\pm$ 3.30} & 24953.11 \footnotesize{$\pm$ 30}\\
         w/o epistemic UQ  & 83.36 \footnotesize{$\pm$ 3.85} & 30.79 \footnotesize{$\pm$ 3.91} & 775.18 \footnotesize{$\pm$ 16}\\ 
         w/o shift-aware & 87.30 \footnotesize{$\pm$ 3.44} & 35.12 \footnotesize{$\pm$ 3.80} & 798.93 \footnotesize{$\pm$ 16}\\
         \midrule
         DUAL (Ours) & 88.23 \footnotesize{$\pm$ 3.42} & 48.00 \footnotesize{$\pm$ 3.44} & 800.22 \footnotesize{$\pm$ 16}\\
         \bottomrule
    \end{tabular}}
    \label{tab:ablation}
\end{table}

\textbf{Causal effect of each component in DUAL}. To understand our method, we test the causal effect of each component in the Tab.~\ref{tab:ablation}. First, we observe that, without epistemic UQ sampling, the performance drops significantly. This shows that the epistemic UQ is the main contributor to our overall framework. Similarly, the planner-prior extraction also contributes significantly to the overall performance. Second, although the shift-aware adaptation contributes slightly to the D4RL hopper-medium, it is an important component in the ODRL transition-shift setting (e.g., hopper-medium-friction). Finally, although the no-distillation variant yields slightly higher online return, using the distillation component is crucial in terms of reducing inference cost. Overall, this ablation study suggests that epistemic UQ sampling (i.e., (iii)) is the most important component. The planner-prior horizon (i.e., (i)) is the second important component, especially in longer planning tasks. The shift-awareness (i.e., (iv)) is crucial when transition shifts. The distillation (i.e., (ii)) is crucial in real-time applications.

\textbf{Performance across offline and online phases}. Next, we plot the evaluated return across O2O-RL time steps in Fig.~\ref{fig:ablation}. Regarding offline pre-training, we observe that on long-term planning tasks (e.g., AntMaze, Frozen-Lake), our method also outperforms other diffusion baselines. This could be because we extract our policy from the diffusion planner, so the policy can learn the prior knowledge about longer planning rewards. For tasks less reliant on lookahead planning (e.g., MuJoCo), our method without the UQ version performs close to the Diff-QL baseline in the offline phase and is also similar in the online phase. That said, with the UQ, our method outperforms significantly other baselines in the online phase, confirming the importance of our UQ to enhance O2O-RL performance. 

\textbf{Our model extension to O2O-RL data-augmentation}. Fig.~\ref{fig:ablation}~(b) shows that our method can effectively combine with EDIS to improve the online return. In particular, after the online time step meets update frequency, we can use the pre-trained diffusion planner to generate data close to the online distribution in EDIS~\citep{liu2024edis} to augment the replay buffer (i.e., the optional step in Alg.~\ref{alg:episodic}). Notably, while this additional step is more costly, it does not change our framework, and DUAL is compatible with this data augmentation method to boost O2O-RL performance.

\textbf{Contribution of UQ components in online phase}. We examine several UQ variants on top of our framework. First, we compare with a non-model UQ version which obtains sub-policies from sampling the input latent $\tau^N \sim \mathcal{N}(0,1)$, then aggregating over these sub-policies in our Eq.~\ref{eq:meanpolicy}. From Fig.~\ref{fig:ablation}~(c), we observe that this version (i.e., ``Ours w $\tau^N$ sampling'') can not improve the performance in online RL, and is close to the version without UQ. Fig.~\ref{fig:ablation}~(c) also confirms Thm.~\ref{thm:eUQ_laplace} that our principled epistemic UQ with actor loss in Eq.~\ref{eq:policy_laplace} (i.e., ``Ours w LA$\mathcal{L}_\pi$'') can result in better exploration and exploitation than the non-principled ELBO-based Laplace in Rmk.~\ref{rmk:ELBO} (i.e., ``Ours w LA$\mathcal{L}_{VLB}$''). Furthermore, in this Frozen-Lake, since we change the slippery levels of the environment, we can see that our shift-aware UQ component (i.e., ``Ours w LA$\mathcal{L}_\pi$ + DA'') can help the policy focus more on exploration when the transition-state shift happens, yielding a higher online return than the one without shift-awareness (i.e., ``Ours w LA$\mathcal{L}_\pi$''). Finally, we apply our Laplace approximation with actor loss in Eq.~\ref{eq:policy_laplace} for Diff-QL, and observe that this ``Diff-QL w LA$\mathcal{L}_\pi$'' also has a higher online return than Diff-QL, showing the generality of our UQ contribution for diffusion-policy models.

\section{Related work}
\textbf{Offline-to-online Reinforcement Learning}. To mitigate the shift between offline and online distribution in O2O-RL, prior works often leverage both offline data and online data in replay buffer for online fine-tuning, and combine with methods in the process of Q-learning~\citep{kumar2020CQL,nakamoto2023calql,lee2021offline}, policy improvement, policy expansion (i.e., freezes the pre-trained policy and initializes a random policy to enhance exploration)~\citep{zhang2023policy}, adaptive weight in online fine-tuning~\citep{wang2023trainonce}, and other actor-critics alignment~\citep{yu23actorritic,wu2022supported,fujimoto19offpolicy,fujimoto2021minimalist,kostrikov2022offline}. Empirically, IQL~\citep{kostrikov2022offline}, Cal-QL~\citep{nakamoto2023calql}, and SO2~\citep{zhang2024perspective} have become standard O2O-RL baselines to balance between being optimistic about improving the policy during the online phase and still being constrained to the conservative offline policy~\cite{tarasov2022corl,liu2024edis}. In particular, IQL~\citep{kostrikov2022offline} warms up the actor-critic by incorporating a weighted behavioral cloning step to enhance online policy improvement. Cal-QL~\citep{nakamoto2023calql} value function calibration learns conservative value functions that are constrained to be larger than the value function of a reference policy to facilitate fast online fine-tuning. SO2~\citep{zhang2024perspective} perturbs the update of the value to smooth out the estimation of biased Q-value and increases the frequency of updates to prevent the exploitation of actions in the early stages of the policy, and alleviates the estimation bias inherited from offline pretraining.

\textbf{Diffusion Model in Reinforcement Learning}. 
Using the diffusion model as either a planner, policy, or data synthesizer has shown promising results in offline RL due to the rich expressiveness of the diffusion model from the abundance of offline datasets~\citep{zhu2024diffusionmodelsreinforcementlearning,lu2025what}. Starting with the diffusion planner, Diffuser~\citep{janner22Planning,ajay2023is} uses the diffusion model to sample the whole trajectory with state and action pairs with useful properties for long-term planning tasks. Later on, Diff-QL~\citep{wang2023diffusion} uses the diffusion model as the parametrized policy class to output single-step actions, where the sampling target is the action conditioned on the state, usually guided by the Q-function via policy gradient-style guidance. Regarding online RL, DACER~\citep{wang2024DACER} has recently extended Diff-QL with an Entropy Regulator to balance exploration and exploitation. Although outperforming diffusion planner methods in short-term planning, diffusion policy approaches often underperform in long-term planning tasks~\citep{lu2025what}. In O2O-RL, \citet{liu2024edis,huang25O2O} train a diffusion planner on an offline dataset to augment trajectory data that conforms to the online distribution, thereby fine-tuning the actor-critic model. Yet, the actor-critic is non-diffusion, omitting the rich expressiveness of the diffusion planner. To address this issue, we introduce DUAL, a unified diffusion framework for O2O-RL, resulting in an improvement in both short-term and long-term planning tasks.

\section{Conclusion}
O2O-RL has emerged as a promising approach to minimizing the costly interactions of Deep-RL models in real-world settings. Existing work still omits the rich expressiveness of the diffusion planner, and the pre-trained policy provides only aleatoric uncertainty, leading to suboptimal exploitation in the online phase. We introduce DUAL, an efficient Diffusion Uncertainty-Aware framework for O2O-RL. DUAL utilizes the prior knowledge of the diffusion planner to distill a fast-sampling diffusion actor policy and transition model in the offline phase. Based on these models, DUAL employs a novel epistemic uncertainty-aware technique that leverages the actor Laplace approximation and distance transition-state-shift detection, thereby enhancing exploration and exploitation performance in the online phase. Yet, our work still has limitations, including additional training costs, the gap between computational efficiency and expected return, the Gaussian assumption underlying the extracted diffusion policy, and additional latency in the online phase due to MC policy sampling. That said, with our promising results, we hope our work will contribute to the literature on improving the uncertainty and robustness of the diffusion model in sequential decision problems. Future work includes addressing the aforementioned limitations and exploring Diffusion Language Models for RL.

\section*{Acknowledgment} 
This work is partially supported by an Amazon Research Award and a grant from JHU Institute of Assured Autonomy. We thank anonymous reviewers for their valuable feedback.

\section*{Impact Statement} 
Our work tackles the quality and computational trade-off of diffusion RL models. It also provides a high-quality epistemic UQ, helping to balance the exploration-exploitation dilemma in RL. The broader impact includes advancing diffusion RL models in sequential decision-making, e.g., autonomous systems, robotics, finance, healthcare, and other high-stakes forecasting domains. 

\bibliography{refs}
\bibliographystyle{plainnat}

\onecolumn
\newpage
\appendix
\textbf{Reproducibility}. Our code inherits from \href{https://github.com/tinkoff-ai/CORL/tree/main}{the offline-to-online codebase}~\citep{tarasov2022corl}, so our reported results could be referred from this page. In Apd.~\ref{apd:exp}, we provide detailed information about our experiments, including experimental settings in Apd.~\ref{apd:experimental_settings} and additional detailed results in Apd.~\ref{apd:additional_results}. In Apd.~\ref{apd:proof}, we provide the proofs for all the results in the main paper, including proof of Thm~\ref{thm:policy_diff}, Thm.~\ref{thm:policy_bc}, Thm~\ref{thm:eUQ_laplace} in Apd.~\ref{proof:thm:policy_diff}, Apd.~\ref{proof:thm:policy_bc}, and Apd.~\ref{proof:thm:eUQ_laplace}, respectively. 

\section{Proofs}\label{apd:proof}
\subsection{Proof of Theorem~\ref{thm:policy_diff}}\label{proof:thm:policy_diff}
\begin{proof}
    Consider three random variables, including the action $a$, state $s$, and trajectory $\tau$, by the definition of conditional probability, we have
    \begin{align}
        \pi_\theta(a|s) = \frac{p_\theta(a,s)}{p_\theta(s)} = \frac{\int p_\theta(a,s,\tau)d\tau}{p_\theta(s)}.
    \end{align}
    On the other hand, by the definition of the trajectory in the diffusion planner, we have $\tau=\{\tau_{s_0}, \tau_{a_0}, \tau_{s_1}, \tau_{a_1}, ...,\tau_{s_H}, \tau_{a_H}\}$, since we define $\tau_{s_0}=s$ and $\tau_{a_0}=a$, yielding
    \begin{align}
        p_\theta(a,s,\tau) = p_\theta(\tau_{>0},\tau_{s_0}=s,\tau_{a_0}=a) \text{ and } p_\theta(\tau_{>0}|\tau_{s_0}=s,\tau_{a_0}=a) = p_\theta(\{\tau_{s_1}, \tau_{a_1}, ...,\tau_{s_H}, \tau_{a_H}\}|\tau_{s_0}=s,\tau_{a_0}=a).
    \end{align}
    Hence, we get 
    \begin{align}
        \pi_\theta(a|s) &= \frac{\int p_\theta(\tau_{>0},\tau_{s_0}=s,\tau_{a_0}=a)d\tau_{>0}}{p_\theta(s)} = \frac{\int p_\theta(\tau_{>0}|\tau_{s_0}=s,\tau_{a_0}=a) p_\theta(\tau_{s_0}=s,\tau_{a_0}=a) d\tau_{>0}}{p_\theta(s)}\\
        &=\frac{\int p_\theta(\{\tau_{s_1}, \tau_{a_1}, ...,\tau_{s_H}, \tau_{a_H}\}|\tau_{s_0}=s,\tau_{a_0}=a) p_\theta(\tau_{s_0}=s,\tau_{a_0}=a) d\tau_{>0}}{p_\theta(s)}, 
    \end{align}
    by $p_\theta(s)$ is a constant w.r.t. $a$ and $p_\theta(\tau_{s_0}=s,\tau_{a_0}=a)$ is a constant w.r.t. $\{\tau_{s_1}, \tau_{a_1}, ...,\tau_{s_H}, \tau_{a_H}\}$, we obtain the result
    \begin{align}
        \pi_\theta(a|s)\propto\int p_\theta(\tau_{>0},\tau_{s_0}=s, \tau_{a_0}=a)d\tau_{>0},
    \end{align}
    where $p_\theta(\tau_{>0},\tau_{s_0}=s, \tau_{a_0}=a)\propto p_\theta(\tau_{>0}|\tau_{s_0}=s, \tau_{a_0}=a)$ of Theorem~\ref{thm:policy_diff}.
\end{proof}

\subsection{Proof of Theorem~\ref{thm:policy_bc}}\label{proof:thm:policy_bc}
\begin{proof}
    Applying ELBO inequality~\citep{song2021scorebased}, minimizing Eq.~\ref{eq:diff_obj} gives us an upper bound on the KL-Divergence of the data distribution $q(\tau^0)$ from the model distribution $p_\theta(\tau^0)$, i.e., $D_{\text{KL}}\left(q(\tau^0) \parallel  p_\theta(\tau^0) \right)$. So, to prove this also gives us an upper bound on $D_{\text{KL}}\left(\pi_{\mathcal{D}_{\text{off}}}(a|s)  \parallel  \pi_\theta(a|s)\right)$, we need to show: (1) $D_{\text{KL}}\left(q(\tau^0) \parallel p_\theta(\tau^0)\right) \geq D_{\text{KL}}\left(\pi_{\mathcal{D}_{\text{off}}}(a|s) \parallel \pi_\theta(a|s) \right)$; (2) if $p_\theta$ is sufficiently expressive that $\min_\theta D_{\text{KL}}\left(q(\tau^0) \parallel  p_\theta(\tau^0) \right)=0$, then $\arg\min_\theta D_{\text{KL}}\left(q(\tau^0) \parallel  p_\theta(\tau^0) \right)=  \arg\min_\theta D_{\text{KL}}\left(\pi_{\mathcal{D}_{\text{off}}}(a|s)  \parallel  \pi_\theta(a|s)\right)$. First, by the KL-Divergence Decomposition and the Chain Rule, we have
    \begin{align}
        D_{\text{KL}}\left(q(\tau^0) \parallel p_\theta(\tau^0) \right) &= D_{\text{KL}}\left(q(\{\tau_{s_0}^0, \tau_{a_0}^0, \tau_{s_1}^0, \tau_{a_1}^0, ...,\tau_{s_H}^0, \tau_{a_H}^0\}) \parallel p_\theta(\{\tau_{s_0}^0, \tau_{a_0}^0, \tau_{s_1}^0, \tau_{a_1}^0, ...,\tau_{s_H}^0, \tau_{a_H}^0\})\right)\\
        &= D_{\text{KL}}\left(q(\{\tau_{s_1}^0, \tau_{a_1}^0, ...,\tau_{s_H}^0, \tau_{a_H}^0\} \mid \{\tau_{s_0}^0, \tau_{a_0}^0\}) \parallel p_\theta(\{\tau_{s_1}^0, \tau_{a_1}^0, ...,\tau_{s_H}^0, \tau_{a_H}^0\} \mid \{\tau_{s_0}^0, \tau_{a_0}^0\})\right)\nonumber\\
        &\quad + D_{\text{KL}}\left(q(\{\tau_{s_0}^0, \tau_{a_0}^0\}) \parallel p_\theta(\{\tau_{s_0}^0, \tau_{a_0}^0\})\right)\\
        &= D_{\text{KL}}\left(q(\{\tau_{s_1}^0, \tau_{a_1}^0, ...,\tau_{s_H}^0, \tau_{a_H}^0\} \mid \{\tau_{s_0}^0, \tau_{a_0}^0\}) \parallel p_\theta(\{\tau_{s_1}^0, \tau_{a_1}^0, ...,\tau_{s_H}^0, \tau_{a_H}^0\} \mid \{\tau_{s_0}^0, \tau_{a_0}^0\})\right)\nonumber\\
        &\quad + D_{\text{KL}}\left(q(\tau_{s_0}^0) \parallel p_\theta(\tau_{s_0}^0)\right) + D_{\text{KL}}\left(q(\tau_{a_0}^0\mid \tau_{s_0}^0) \parallel p_\theta(\tau_{a_0}^0 \mid \tau_{s_0}^0)\right).
    \end{align}
    Since $\pi_\theta(a|s)=p_\theta(\tau_{a_0}^0 \mid \tau_{s_0}^0)$, and the offline buffer $\mathcal{D}_{\text{offline}}=\{(s,a,r,s')\}$ is sample from the underlying distribution $q(\tau^0)$, yielding the behavior policy $\pi_{\mathcal{D}_{\text{off}}}(a|s)=q(\tau_{a_0}^0\mid \tau_{s_0}^0)$, by the non-negativity of the KL-Divergence, we obtain
    \begin{align}\label{thm:policy_bc_1}
        D_{\text{KL}}\left(q(\tau^0) \parallel  p_\theta(\tau^0) \right)\geq D_{\text{KL}}\left(q(\tau_{a_0}^0\mid \tau_{s_0}^0) \parallel p_\theta(\tau_{a_0}^0 \mid \tau_{s_0}^0)\right) = D_{\text{KL}}\left(\pi_{\mathcal{D}_{\text{off}}}(a|s)  \parallel  \pi_\theta(a|s)\right).
    \end{align}
    On the other hand, by the KL-Divergence definition, let $p^*_\theta$ achieves $D_{\text{KL}}\left(q(\tau^0) \parallel p^*_\theta(\tau^0)\right)=0$, then we have
    \begin{align}
        D_{\text{KL}}\left(q(\tau^0) \parallel p^*_\theta(\tau^0)\right)=0 \Leftrightarrow q(\{\tau_{s_0}^0, \tau_{a_0}^0, \tau_{s_1}^0, \tau_{a_1}^0, ...,\tau_{s_H}^0, \tau_{a_H}^0\}) = p^*_\theta(\{\tau_{s_0}^0, \tau_{a_0}^0, \tau_{s_1}^0, \tau_{a_1}^0, ...,\tau_{s_H}^0, \tau_{a_H}^0\}) \quad \text{ a.e.}
    \end{align}
    If $q(\{\tau_{s_0}^0, \tau_{a_0}^0, \tau_{s_1}^0, \tau_{a_1}^0, ...,\tau_{s_H}^0, \tau_{a_H}^0\}) = p^*_\theta(\{\tau_{s_0}^0, \tau_{a_0}^0, \tau_{s_1}^0, \tau_{a_1}^0, ...,\tau_{s_H}^0, \tau_{a_H}^0\})$ almost everywhere, then by marginalizing over $\tau_{>0}^0 := \{\tau_{s_1}^0, \tau_{a_1}^0, ...,\tau_{s_H}^0, \tau_{a_H}^0\}$, we get
    \begin{align}
        q(\{\tau_{s_0}^0, \tau_{a_0}^0\}) &= \int q(\tau_{s_0}^0, \tau_{a_0}^0, \tau_{s_1}^0, \tau_{a_1}^0, ...,\tau_{s_H}^0, \tau_{a_H}^0) d\tau_{>0}^0 = \int p_\theta^*(\tau_{s_0}^0, \tau_{a_0}^0, \tau_{s_1}^0, \tau_{a_1}^0, ...,\tau_{s_H}^0, \tau_{a_H}^0) d\tau_{>0}^0 = p_\theta^*(\{\tau_{s_0}^0, \tau_{a_0}^0\}),
    \end{align}
    and similarly with $q(\tau_{s_0}^0) = p_\theta^*(\tau_{s_0}^0)$. Therefore
    \begin{align}
        p^*_\theta(\tau_{a_0}^0 \mid \tau_{s_0}^0) = \frac{p^*_\theta(\tau_{a_0}^0, \tau_{s_0}^0)}{p_\theta^*(\tau_{s_0}^0)} = \frac{q(\tau_{a_0}^0, \tau_{s_0}^0)}{q(\tau_{s_0}^0)} = q(\tau_{a_0}^0 \mid \tau_{s_0}^0),
    \end{align}
    i.e., $D_{\text{KL}}\left(q(\tau_{a_0}^0 \mid \tau_{s_0}^0) \parallel p^*_\theta(\tau_{a_0}^0 \mid \tau_{s_0}^0)\right)=0$. Combining with Eq.~\ref{thm:policy_bc_1}, we obtain if $\min_\theta D_{\text{KL}}\left(q(\tau^0) \parallel  p_\theta(\tau^0) \right)=0$, then
    \begin{align}
        \arg\min_\theta D_{\text{KL}}\left(q(\tau^0) \parallel  p_\theta(\tau^0) \right)=  \arg\min_\theta D_{\text{KL}}\left(\pi_{\mathcal{D}_{\text{off}}}(a|s)  \parallel  \pi_\theta(a|s)\right),
    \end{align}
    i.e., minimizing the objective function in Eq.~\ref{eq:diff_obj} on the offline buffer $\mathcal{D}_{\text{offline}}=\{(s,a,r,s')\}$ gives us an upper bound on the KL-Divergence of $\pi_{\mathcal{D}_{\text{off}}}(a|s)$ from $\pi_\theta(a|s)$ of Theorem~\ref{thm:policy_bc}.
\end{proof}

\subsection{Proof of Theorem~\ref{thm:eUQ_laplace}}\label{proof:thm:eUQ_laplace}
\begin{proof}
    From the actor loss function $\mathcal{L}_{\pi}(\theta) = -\mathbb{E}_{(s,a) \sim \mathcal{D}}[Q_{\phi}(s,a) \cdot \log\pi_\theta(a|s)]$ in Eq.7, we can express the empirical risk minimization, which typically decomposes into a sum over empirical loss terms $\ell(s,a;\theta)$ and regularizer $r(\theta)$ as follows
    \begin{align}
        \theta_{\text{MAP}}= \arg\min_\theta \mathcal{L}_{\pi}(\theta) \approx \arg\min_\theta \sum_{(s,a) \in \mathcal{D}}[\ell(s,a;\theta)] + r(\theta),
    \end{align}
    where the loss $\ell(s,a;\theta)=-Q_{\phi}(s,a)\cdot \log \pi_\theta(a|s)$ and the regularizer $r(\theta)=-\log p(\theta)$. From a Bayesian perspective~\citep{abdolmaleki2018maximum}, these terms correspond to weighted log-likelihoods and a log-prior, respectively; thus, $\theta_{\text{MAP}}$ above is a weighted maximum a-posteriori (MAP) estimate for the weighted log-likelihoods and log-prior as follows
    \begin{align}
        \ell(s,a;\theta) = -Q_{\phi}(s,a)\cdot \log p(a|s;\theta) = - \log p(a|s;\theta)^{Q_{\phi}(s,a)} \quad \text{and} \quad r(\theta) = -\log p(\theta),
    \end{align}
    where $p(\theta)$ can be any prior regularizer. Hence, by Bayes' theorem, we can derive the posterior as follows
    \begin{align}
        p(\theta|\mathcal{D}) = \frac{p(\mathcal{D}|\theta) p(\theta)}{\int_{\theta'} p(\mathcal{D}|\theta') p(\theta')d\theta'} 
        &\approx \frac{\exp(\sum_{(s,a) \in \mathcal{D}}[\log p(a|s;\theta)^{Q_{\phi}(s,a)}]) p(\theta)}{\int_{\theta'} \exp(\sum_{(s,a) \in \mathcal{D}}[\log p(a|s;\theta')^{Q_{\phi}(s,a)}]) p(\theta')d\theta'}\\
        &= \frac{\exp(\sum_{(s,a) \in \mathcal{D}}[Q_{\phi}(s,a)\cdot \log p(a|s;\theta)]) p(\theta)}{\int_{\theta'} \exp(\sum_{(s,a) \in \mathcal{D}}[Q_{\phi}(s,a)\cdot \log p(a|s;\theta')]) p(\theta')d\theta'}.
    \end{align}
    Therefore, the exponential of the actor loss $\mathcal{L}_{\pi}(\theta)$ amounts to an unnormalized posterior. By normalizing it, we obtain
    \begin{align}
        p(\theta|\mathcal{D}) = \frac{1}{Z}p(\mathcal{D}|\theta) p(\theta) \approx \frac{1}{Z} \exp(-\mathcal{L}_{\pi}(\theta)),
    \end{align}
    with the normalizing constant $Z=\int_{\theta'} p(\mathcal{D}|\theta') p(\theta')d\theta'$. Hence, the Laplace approximation utilizes a second-order Taylor expansion of $\mathcal{L}_{\pi}(\theta)$ at the mode $\theta_{\text{MAP}}$, resulting in the following approximation
    \begin{align}
        \mathcal{L}_{\pi}(\theta) \approx \mathcal{L}_{\pi}(\theta_{\text{MAP}}) + \frac{1}{2} \left(\theta - \theta_{\text{MAP}}\right) \left(\nabla_\theta^2 \mathcal{L}_\pi(\theta)|_{\theta = \theta_{\text{MAP}}}\right) \left(\theta - \theta_{\text{MAP}}\right)^\top,
    \end{align}
    where the disappearance of the first-order Taylor expansion implies that $\theta_{\text{MAP}}$ corresponds to a minimum. Therefore, $Z$ can be expressed as a Gaussian integral
    \begin{align}
        Z &= \int \exp(-\mathcal{L}_{\pi}(\theta))d\theta\\
        &\approx \exp(-\mathcal{L}_{\pi}(\theta_{\text{MAP}})) \int \exp\left(\theta - \theta_{\text{MAP}}\right) \left(\nabla_\theta^2 \mathcal{L}_\pi(\theta)|_{\theta = \theta_{\text{MAP}}}\right) \left(\theta - \theta_{\text{MAP}}\right)^\top d\theta\\
        &=\exp \left(-\mathcal{L}_{\pi}(\theta_{\text{MAP}})\right)(2\pi)^{\frac{|\theta|}{2}}(\det(\Sigma))^{\frac{1}{2}}.
    \end{align}
    As a result, the posterior distribution is approximated with a multivariate Gaussian distribution, i.e., $p(\theta|\mathcal{D})\approx \mathcal{N}(\theta_{\text{MAP}},\Sigma)$, where the covariance matrix $\Sigma$ is determined by the Hessian of the posterior, i.e., $\Sigma = \left(\nabla_\theta^2 \mathcal{L}_\pi(\theta)|_{\theta = \theta_{\text{MAP}}}\right)^{-1}$ of Theorem~\ref{thm:eUQ_laplace}.
\end{proof}

\section{Experiments}\label{apd:exp}
\subsection{Experimental settings}\label{apd:experimental_settings}
\textbf{Datasets and evaluation metrics}: Following~\citet{tarasov2022corl}, we use the D4RL dataset~\citep{fu2021d4rldatasetsdeepdatadriven,brockman2016gym} for offline-pretraining. D4RL includes multitask datasets where an agent performs different tasks in the same environment, and is collected with mixtures of policies, such as from hand-designed controllers and human demonstrators. In online fine-tuning, the model is fine-tuned by a replay buffer that includes an offline dataset and a cumulated online dataset collected from online interactions. We evaluate the \textbf{expected return} (i.e., the expected value of the cumulative reward for each episode) with the online environment. We report the average of expected return in the online phase in all tables, and the expected return for each evaluation in all figures. We select three sets of tasks that are most commonly studied in prior works in offline RL with diffusion~\citep{lu2025what}, and also self-design the Frozen-Lake for O2O-RL. Specifically:
\begin{itemize}[leftmargin=13pt,topsep=0pt,itemsep=0mm]
    \item \textbf{D4RL MuJoCo Locomotion}: aims to control agents to stand stably or run forward as fast as possible across three sub-environments. For each sub-environment, D4RL includes the \textbf{``medium''} (collected from a policy trained to reach approximately $30\%$ expert's performance), \textbf{``medium-replay''} (collected during training the medium agent, which includes a mix of early low-quality behaviors \& later better behaviors), and \textbf{``medium-expert''} (collected from the medium policy \& the expert policy) offline dataset. The sub-environments are as follows:
    \begin{itemize}[leftmargin=13pt,topsep=0pt,itemsep=0mm]
        \item \textbf{Hopper} has of $11$ state features, representing position \& velocity of $4$ body parts of a hopper (i.e., a one-legged robot). The action has $3$ features, representing the torques applied at $3$ hinge joints that connect the body parts. The reward has $3$ terms to measure how well the robot stands, runs forward, and penalizes for taking too large actions.
        \item \textbf{Half-Cheetah} consists of $19$ state features with position and velocity values of different body parts of a cat-robot. The action has $6$ features with the torques applied at six hinge joints that connect the body parts. The reward is based on how well the robot runs forward and is penalized for taking too large actions.
        \item \textbf{Walker2D} is based on Hopper by adding another set of legs that allow the robot to walk forward instead of hopping. Hence, the reward is the same as the Hopper, while the state has $17$ features for the position \& velocity of different parts of the body. The action includes $6$ features for the torques applied at $6$ hinge joints that connect the body parts.
    \end{itemize}
    \item \textbf{ODRL MuJoCo Locomotion}: To examine the real-world transition dynamic, we follow ODRL in~\citet{lyu2024odrl}, where halfcheetah-friction, halfcheetah-med-gravity, hopper-kinematic, walker2d-morphology are based on halfcheetah-medium-v2, halfcheetah-medium-replay-v2, hopper-medium-v2, walker2d-medium-v2, respectively, with modified transition dynamics in the online phase with friction, gravity, kinematic, morphology shift levels of $2.0$. Details of the shifts are in~\cite{lyu2022midly}. These attribute modifications allow the simulated robots to significantly change their motion characteristics between the offline and online phases.
    \item \textbf{AntMaze}: aims to navigate a quadrupedal ant-robot through a 2D maze to reach a specific goal location. The 2D maze includes a variety of layouts, e.g., \textbf{U-shaped (UMaze)}, \textbf{medium-sized}, and \textbf{large-sized} mazes. The state consists of $28$ features, representing positional and velocity values of different ants' body parts. The action includes $8$ features, representing the torques applied at the four-legged robot with eight joints. The reward is the negative Euclidean distance between the achieved position and the desired goal. For each maze layout, the D4RL includes the ``play'' and ``diverse'' offline datasets. The \textbf{``play'' (or non-diverse)} dataset contains trajectories from specific start and goal locations. In contrast, the \textbf{``diverse''} dataset is generated by commanding the Ant to travel to random goal locations across the maze.
    \item \textbf{Adroit}: aims to train a robot-hand to perform like a human across four tasks. The action includes from $24$ to $30$ features depending on the task, representing the angular positions of the Adroit hand joints. The tasks include:
    \begin{itemize}[leftmargin=13pt,topsep=0pt,itemsep=0mm]
        \item \textbf{AdroitHandPen-v1} controls the hand to manipulate a pen to achieve a desired goal position and rotation. The state includes $45$ features, representing the finger joints' angular position, the palm's pose, and the pose of the real pen and target goal. The reward measures how close the pen is to its target, the similarity of their orientations, and whether the pen is still on the hand.
        \item \textbf{AdroitHandDoor-v1} controls the hand to open a door with a latch. The state has $39$ features for the finger joints' angular position, palm's pose, and latch and door's information. The reward measures how close the palm is to the door handle, the current door hinge angular position v.s. open door state, velocity penalty, and how the door hinge is opened more than some radians.
        \item \textbf{AdroitHandHammer-v1} controls the hand to hammer a nail into a board. The state has $46$ features for finger joints' angular position, palm's pose, hammer \& nail's pose, and external forces on the nail. The reward measures how close the palm is to the hammer, hammer's head from nail's head, nail's head from/to board, velocity penalty, and how far the hammer is lifted.
        \item \textbf{AdroitHandRelocate-v1} controls the hand to pick up a ball \& move it to a target location. The state has $39$ features, representing the ball's \& target's finger joints' angular position, the palm's pose, and kinematic information. The reward measures how close the palm is to the ball, whether the ball is lifted from the table, and the ball’s distance to its target.
    \end{itemize}
    \item \textbf{Frozen-Lake}: aims to navigate an agent through a 2D frozen-lake map to reach a specific goal location. The state is the agent's position on a 2D map. The action is whether to move up, down, left, or right. The reward is $1$ if the agent reaches the goal location and $0$ if not, or is terminated by reaching the hole or limiting the episode. There are slippery sections on the map that may cause the robot to move in an undesirable direction. We collect an offline dataset from samples acquired by the $\epsilon$-greedy algorithm. Notably, to highlight the state-transition shift, we follow~\citet{bui2025qlearningshiftawareupperconfidence} and change the slippery level (i.e., the probability that the agent will slip, moving in a perpendicular direction to the intended direction) from $1/4$ during offline data collection to $1/3$ during online fine-tuning. As a result of this setting, we can ensure that the state-transition distribution $\mathbb{P}(s'|s,a)$ also changes accordingly.
\end{itemize}

\textbf{Baseline and hyper-parameters details:} In our experiments, we select baseline and hyper-parameters to follow exactly the high-quality O2O-RL benchmark from~\citet{tarasov2022corl}. Note that the O2O-RL benchmark defines the reward as the default instead of the spare reward in~\citet{zhou2025efficient}. For a fair comparison, all diffusion models are trained with the same offline dataset in pre-training and the same replay buffer in online fine-tuning. Based on two modified critics (i.e., IQL~\citep{hansenestruch2023idql} and Cal-QL~\citep{nakamoto2023calql}), we compare with the following diffusion-based actor models and their UQ variants:
\begin{itemize}[leftmargin=13pt,topsep=0pt,itemsep=0mm]
    \item \textbf{IQL}~\citep{tarasov2022corl}: the loss $\mathcal{L}_{TD}(\phi)$ in Eq.~\ref{eq:qTD} is extended to avoid querying out-of-sample (unseen) actions by considering fitted $Q$ evaluation with a SARSA-style objective, which aims to learn the value of the behavior policy, defined by the following critic loss $$\mathcal{L}_Q(\phi)=\mathbb{E}_{(s,a,s',a')\sim \mathcal{D}}\left[\left\{\left(r(s,a) + Q_\phi(s',a')\right) -  Q_\phi(s,a)\right\}^2\right].$$
    \item \textbf{Cal-QL}~\citep{nakamoto2023calql}: the loss $\mathcal{L}_{TD}(\phi)$ in Eq.~\ref{eq:qTD} is extended to learns conservative value functions that are constrained to be larger than the value function of a reference policy to facilitate fast online fine-tuning, defined by the following critic loss 
    $$\mathcal{L}_Q(\phi)=\mathbb{E}_{s\sim \mathcal{D}, a\sim \pi}\left[\max \left(Q_\phi(s,a), \mathbb{E}_{a\sim \mu}\left[Q^\mu(s,a)\right]\right)\right] - \mathbb{E}_{(s,a)\sim \mathcal{D}}\left[Q_\phi(s,a)\right],$$
    where $Q^\mu(s,a)$ is the calibrated $Q$-function of the learned $Q$-function $Q_\phi^\pi$ w.r.t. a reference policy $\mu$, i.e., $\mathbb{E}_{a\sim \mu}\left[Q_\phi^\pi(s,a)\right] \geq \mathbb{E}_{a\sim \mu}\left[Q^\mu(s,a)\right], \forall s \in \mathcal{D}$.
    \item \textbf{SO2}~\citep{zhang2024perspective}: the loss $\mathcal{L}_{TD}(\phi)$ in Eq.~\ref{eq:qTD} is extended with ensemble Q-value to expand the action distribution used to estimate the target Q-value, yielding a smoother estimate, defined as follows
    $$Q_{\phi_i}(s,a) \leftarrow r + \gamma \left(\hat{Q}_{\phi_i}(s',a'+\epsilon) - \beta \log \pi(a'|s')\right),$$
    where $a'\sim \pi (\cdot|s')$, $\epsilon \sim clip(\mathcal{N}(0,\sigma), -c, c)$, $i=1,\cdots,N_{\text{ensemble}}$, $\epsilon$ is the Gaussian noise bounded by c, and $\phi_i$ is the parameter of $i$-th $Q$ model in the $N_{\text{ensemble}}$. Note that we use the \textbf{SO2+OFF2ON} setting, where SO2 is combined with OFF2ON~\citep{lee2021offline} to encourage the use of near-on-policy samples from the offline dataset.
    \item \textbf{EDIS}~\citep{liu2024edis}: use diffusion planner in Sec.~\ref{sec:diffusionRL}, with energy guider $\varepsilon(\tau) = \frac{q(\tau^0)^\text{online}}{q(\tau^0)^\text{offline}}$ (trained from $\mathcal{D}_{\text{offline}}\cup \mathcal{D}_{\text{offline}}$) to augment trajectory data that conforms to the online distribution by $p_\theta(\tau)^{\text{online}} \propto p_\theta(\tau)^{\text{offline}} e^{-\varepsilon(\tau)}$. The augmented data from $p_\theta(\tau)^{\text{online}}$ is added to the experience replay buffer to fine-tune another actor-critic model. The trajectory length is $H=8$, and the denoising timestep is $N=32$.
    \item \textbf{Diffuser}: We extend from~\citet{janner22Planning} in offline RL to our O2O-RL, where we use diffusion planner in Sec.~\ref{sec:diffusionRL} to output a trajectory. The Diffuser is pre-trained on offline dataset $\mathcal{D}_{\text{offline}}$ and fine-tuned with the replay buffer $\mathcal{D}_{\text{offline}}\cup \mathcal{D}_{\text{offline}}$. The action is obtained by using our policy extraction in Eq.~\ref{eq:policy_diff_act}. Following~\citep{lu2025what}, the trajectory length is $H=8$ and denoising timestep is $N=20$.
    \item \textbf{Diff-QL}: We extend from~\citet{wang2023diffusion} in offline RL to our O2O-RL, where we use the Diff-QL policy to output an action conditioned on given input state. The Diff-QL policy is pre-trained on offline dataset $\mathcal{D}_{\text{offline}}$ and fine-tuned with the replay buffer $\mathcal{D}_{\text{offline}}\cup \mathcal{D}_{\text{offline}}$. The number of denoising timesteps is $N=8$.
    \item \textbf{Diff-QL w LA$\mathcal{L}_\pi$}: is an extension of Diff-QL above, where we additionally combine with our Laplace approximation in Eq.~\ref{eq:policy_laplace} to obtain epistemic UQ. The number of MC samples in policy sampling in Eq.~\ref{eq:meanpolicy} is $M=5$.
    \item \textbf{DACER}: We extend from DACER in online RL~\citep{wang2024DACER} to our O2O-RL, where we leverage the Diff-QL above to get its UQ in the online phase by using a Gaussian mixture model (GMM) to fit the policy distribution and compute the entropy of this action distribution. By using GMM with $3$ number of Gaussian distributions, for each online time step, DACER requires the diffusion policy samples $200$ times to get different actions to obtain the final UQ with entropy, causing a very high latency in Fig.~\ref{fig:latency}.
    \item \textbf{Ours method (DUAL)}: Since we extract our policy from Diffuser Planner in EDIS~\citep{liu2024edis}, we run progressive distillation~\citep {salimans2022progressive} to reduce the number of denoising timestep from $N=32$ to $N=1$, and the trajectory length is $H=\{1,8\}$. The number of MC samples in policy sampling in Eq.~\ref{eq:meanpolicy} is $M=5$. 
\end{itemize}

\textbf{Source code and computing system.}
The source code to reproduce our results is available at \href{https://github.com/Angie-Lab-JHU/Diffusion-Uncertainty-Aware-O2O-RL}{this GitHub link}. We train our model on two single GPUs: NVIDIA A100-PCIE-40GB with 8 CPUs: Intel(R) Xeon(R) Gold 6248R CPU @ 3.00GHz with 8GB RAM per each, and require 32GB available disk space for storage. We test our model on three different settings, including (1) a single GPU: NVIDIA Tesla~K80 accelerator-12GB GDDR5 VRAM with 8-CPUs: Intel(R) Xeon(R) Gold 6248R CPU @ 3.00GHz with 8GB RAM per each; (2) a single GPU: NVIDIA RTX~A5000-24564MiB with 8-CPUs: AMD Ryzen Threadripper 3960X 24-Core with 8GB RAM per each; and (3) a single GPU: NVIDIA A100-PCIE-40GB with 8 CPUs: Intel(R) Xeon(R) Gold 6248R CPU @ 3.00GHz with 8GB RAM per each.

\subsection{Additional results}\label{apd:additional_results}
\textbf{Ablation study with $\lambda$}: We test the robustness of the shift-awareness hyper-parameter $\lambda$ across different values, including $\{0.0, 0.1, 0.25, 0.5, 0.75, 1.0\}$. Tab.~\ref{tab:lambda} shows its average online return on D4RL hopper-medium (i.e., no transition shift), and O4RL hopper-friction environment (i.e., transition shift). Regarding when there is no transition shift, we observe that our results are stable across $\lambda=(0.1,0.5)$ in most D4RL tasks, such as the hopper-medium in Tab.~\ref{tab:lambda}. This is because D4RL tasks focus on the policy shift rather than the transition shift. However, when the transition shift occurs, we observe that without shift-awareness (i.e., $\lambda=0$), the online return drops significantly (e.g., hopper-friction in Tab.~\ref{tab:lambda}, Frozen-Lake in Fig.~\ref{fig:ablation}), showing that the shift-awareness component is crucial when there are transition shifts.
\begin{table}[ht!]
    \centering
    \caption{Average online return across different $\lambda$ in Eq.\ref{eq:shiftaware} on the D4RL hopper-medium and ODRL hopper-friction environments.}
    \vspace{-0.05in}
    \begin{tabular}{c|cccccc}
    \toprule
         $\lambda$ & 0.0 & 0.1 & 0.25 & 0.5 (default) & 0.75 & 1.0\\
    \midrule
         hopper-medium & 87.30 $\pm$ 3.44 & 88.07 $\pm$ 3.36  & 88.50 $\pm$ 3.36  & 88.23 $\pm$ 3.42 & 87.30 $\pm$ 3.64 & 86.81 $\pm$ 4.00\\
         hopper-friction & 35.12 $\pm$ 3.80 & 42.11 $\pm$ 3.40 & 47.96 $\pm$ 3.38 & 48.00 $\pm$ 3.44 & 49.58 $\pm$ 3.33 & 47.25 $\pm$ 3.60\\
    \bottomrule
    \end{tabular}
    \vspace{-0.1in}
    \label{tab:lambda}
\end{table}

\textbf{Ablation study on how the quality of the logging policy affects the performance}: We evaluate our model performance across different offline logging policies with hopper-medium-v2 and hopper-medium-expert-v2, with $33\%$, $67\%$, and $100\%$ dataset size in Tab.~\ref{tab:dataquality}. In the first row, we measure how many online steps our model needs to reach the same performance at $90$ average online return. We observe that hopper-medium-expert-v2 has very good logging policies, and the model can achieve a 90 return without needing online iterations. Meanwhile, the logging policies in hopper-medium-v2 are worse, causing around $1.1$M online steps to be required to reach the same performance. In the last two rows, we compare our model performance between with and without the policy extraction from the diffusion planner. We can see that if the model is only trained on a very sub-optimal offline dataset with low-quality policies (e.g., $33\%$ hopper-medimum), the difference in online return between with and without planner-prior extraction is quite small. This implies that the distillation from the diffusion planner may not be really effective for the policy if trained with very low-quality policies.

\begin{table}[ht!]
    \centering
    \caption{Average online return across different offline logging policies budget with hopper-medium-v2 and hopper-medium-expert-v2.}
    \vspace{-0.05in}
    \scalebox{0.86}{
    \begin{tabular}{c|cccccc}
        \toprule
         Method & 33\% ho-medium & 33\% ho-expert & 67\% ho-medium & 67\% ho-expert & ho-medium & ho-expert\\
         \midrule
         DUAL (\#steps) $\rightarrow$ 90 & 1.4M & 0.8M & 1.3M & 0.4M & 1.1M & 0\\
         DUAL (Ours) & 71.10 $\pm$ 3.66  & 74.28 $\pm$ 4.69 & 78.10 $\pm$ 3.51 & 86.34 $\pm$ 4.63 & 88.23 $\pm$ 3.42 & 115.33 $\pm$ 4.58\\
         w/o planner-prior extraction & 70.09 $\pm$ 3.92 & 71.76 $\pm$ 4.90 & 72.98 $\pm$ 3.92 & 80.96 $\pm$ 4.88 & 82.08 $\pm$ 3.81 & 111.12 $\pm$ 4.60\\
         \bottomrule
    \end{tabular}}
    \vspace{-0.1in}
    \label{tab:dataquality}
\end{table}

\textbf{Ablation study on how the size of the offline-dataset affects the online budget}. From Tab.~\ref{tab:dataquality}, we also observe that the size of the offline dataset significantly affects the online budget. This is because diffusion models often require a certain amount of data to learn high-dimensional and complex data distributions effectively. Notably, we observe that if the offline-dataset includes high-quality behavior policies (e.g., hopper-medium-expert-v2), the offline-dataset's size affects more significant to the online performance. That said, if the offline-dataset includes lower-quality behavior policies, the effectiveness of the offline-dataset's size will have less significance to the online performance (e.g., hopper-medium-v2).

\textbf{Training cost}: Compared with EDIS baseline (another diffusion framework for O2O-RL), in antmaze-umaze-diverse-v2, EDIS needs around $3$ hours to run with NVIDIA A-100. Meanwhile, our method needs about $4$ hours, including $3$ hours similar to EDIS (with diffusion planner pre-training (L3-4 in Alg.~\ref{alg:episodic}), actor-critic training (L6-7)), $55$ mins for the distillation step, and $5$ mins for Laplace approximation. Notably, our Laplace approximation step (L-9) causes very minor training time when compared to the distillation step (L-5), as we only do the last-layer Laplace approximation in Eq.~\ref{eq:policy_laplace}. 

\begin{figure*}[ht!]
    \centering
    \includegraphics[width=1.0\linewidth]{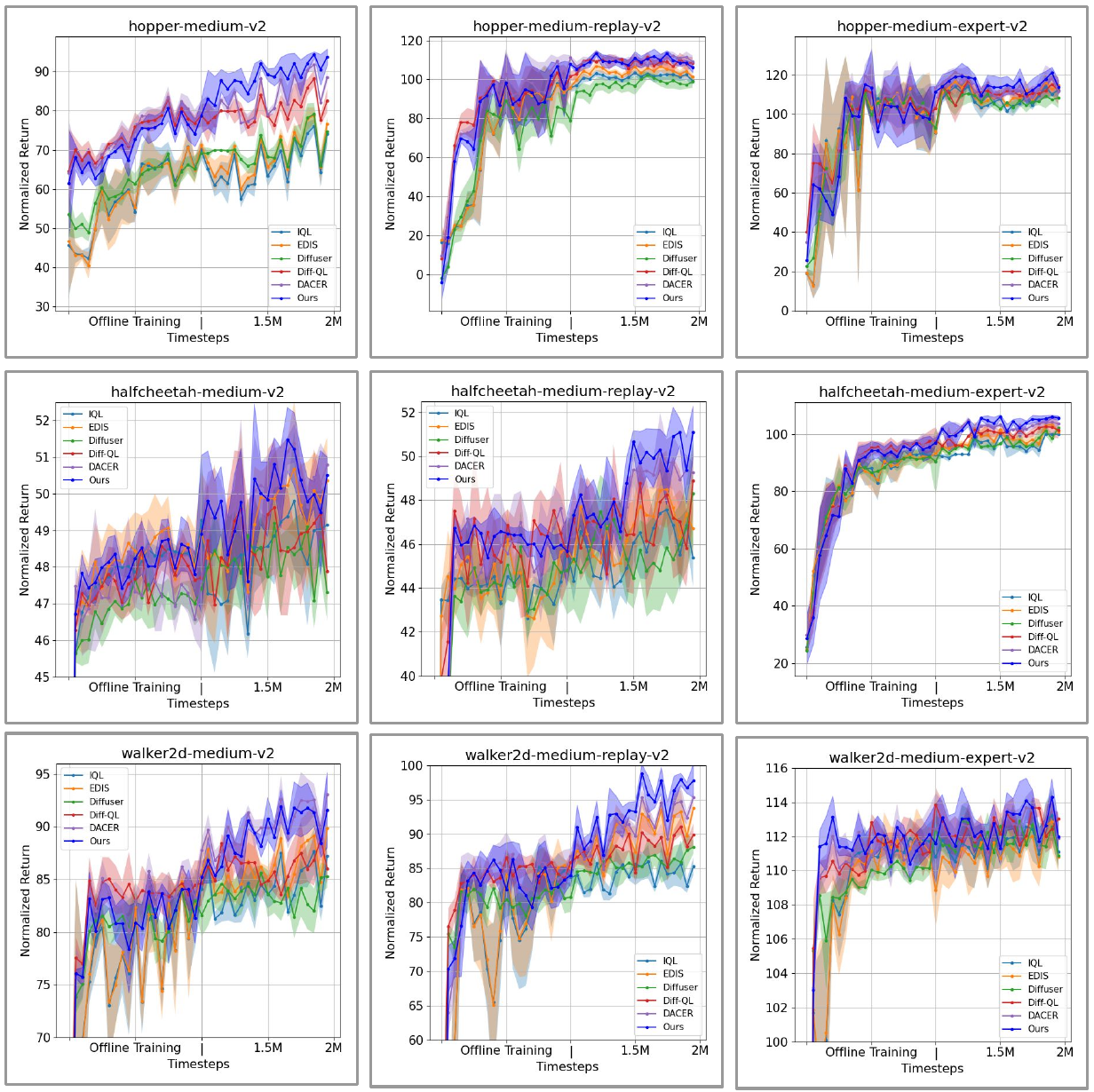}
    \vspace{-0.2in}
    \caption{Offline-to-Online Benchmark with diffusion baselines on MuJoCo Locomotion environments. This is a simple short-term planning task, where the goal is to control agents (e.g., humanoids, quadrupeds) to stand stably or run faster. Overall, all methods cover quickly after a few training epochs. That said, diffusion policy methods like Diff-QL and DACER are still better than the non-diffusion IQL baseline. Notably, our method also leverages the diffusion policy with a high-quality epistemic UQ, which can balance the exploration and exploitation, resulting in an improvement in expected return in the online phase in some tasks, such as in ``medium-v2'' and ``walker2d-medium-replay-v2''.}
    \label{fig:mujoco}
    \vspace{-2in}
\end{figure*}

\begin{figure*}[ht!]
    \centering
    \includegraphics[width=1.0\linewidth]{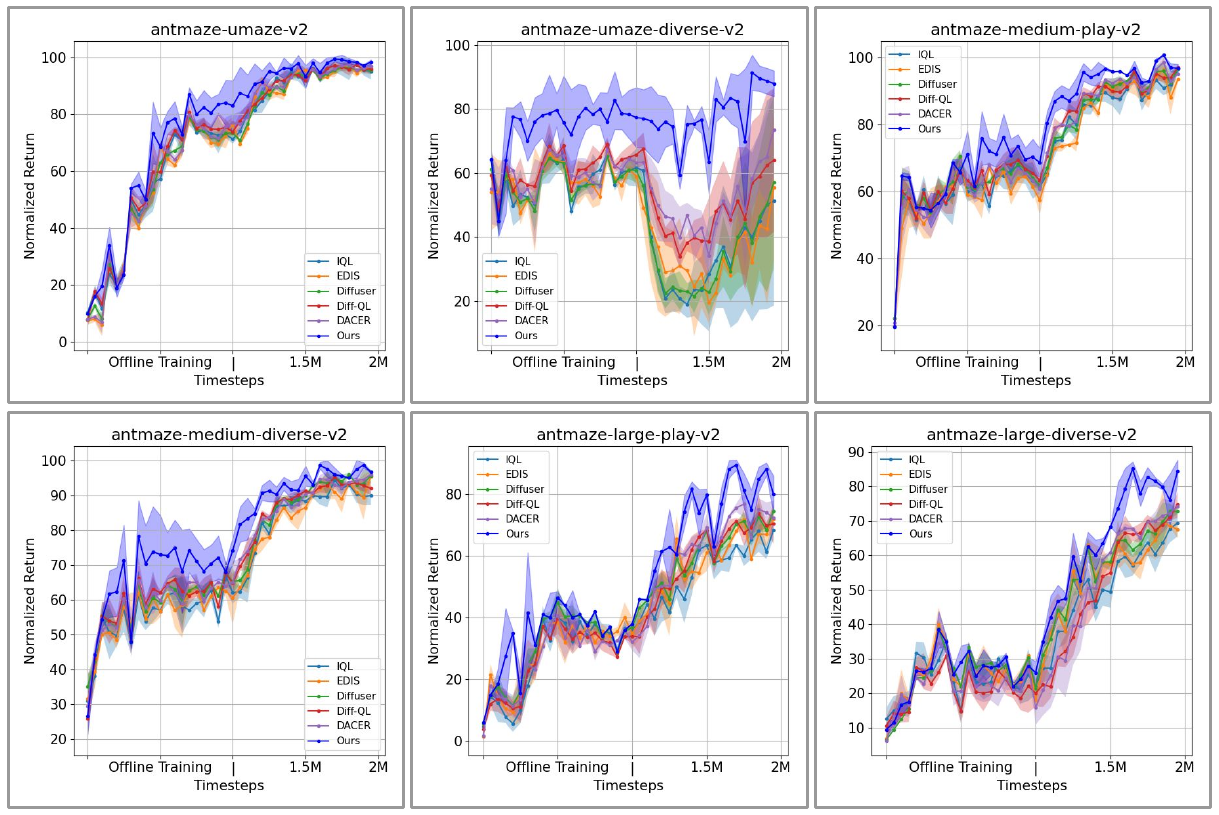}
    \caption{O2O-RL Benchmark with diffusion baselines on AntMaze. Our method outperforms significantly other baselines in this complex long-term planning task, where the goal is to train a four-legged ant agent to navigate across different 2D mazes to reach a target location.}
    \label{fig:antmaze}
\end{figure*}

\begin{figure*}[ht!]
    \centering
    \includegraphics[width=1.0\linewidth]{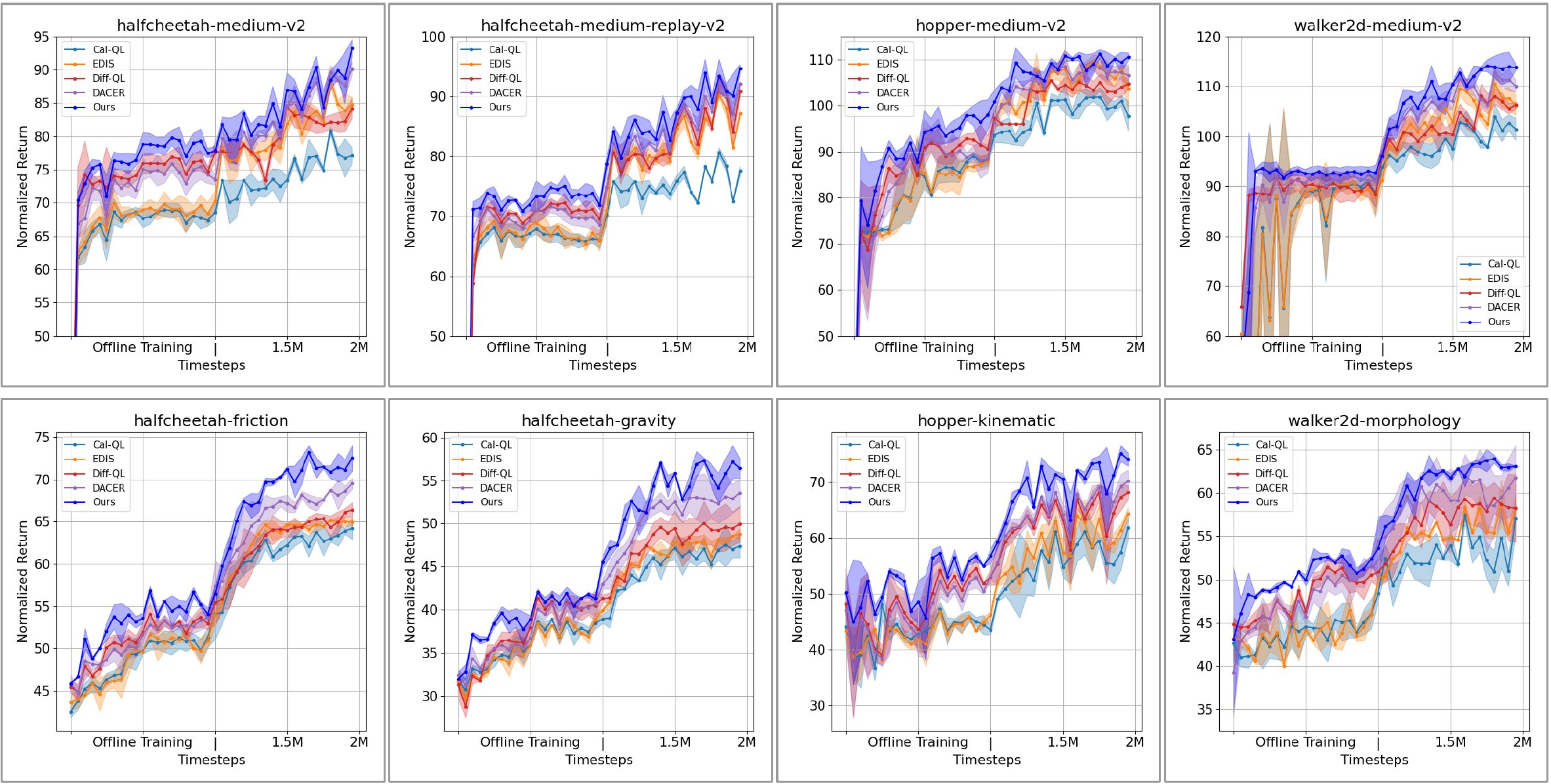}
    \caption{Offline-to-Online Benchmark with Cal-QL based methods on D4RL and ODRL MuJoCo environments.}
    \label{fig:cal_D4RL}
\end{figure*}

\begin{figure*}[ht!]
    \centering
    \includegraphics[width=1.0\linewidth]{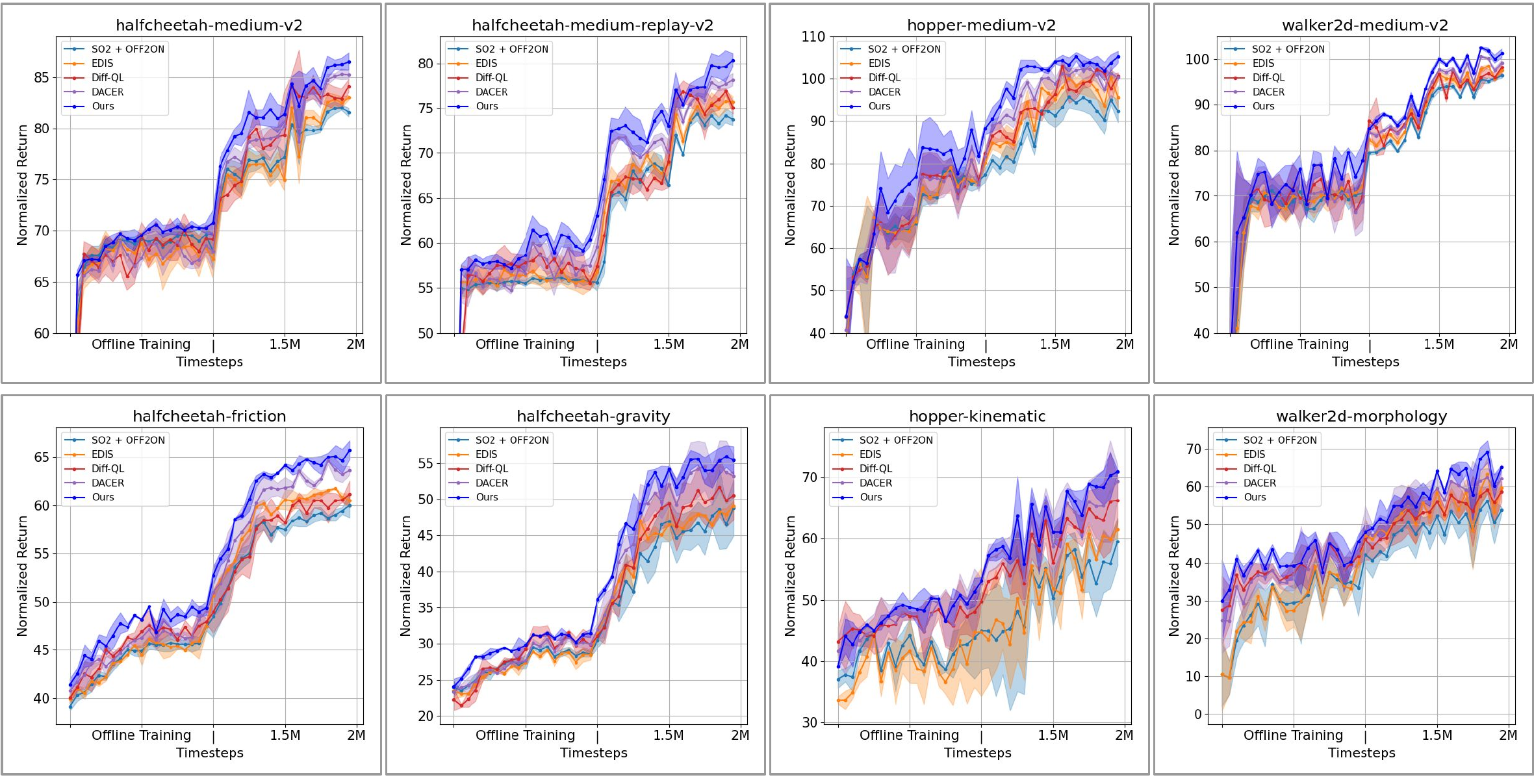}
    \caption{Offline-to-Online Benchmark with SO2+OFF2ON-based methods on D4RL and ODRL MuJoCo environments.}
    \label{fig:so2_D4RL}
\end{figure*}

\begin{figure*}[ht!]
    \centering
    \includegraphics[width=1.0\linewidth]{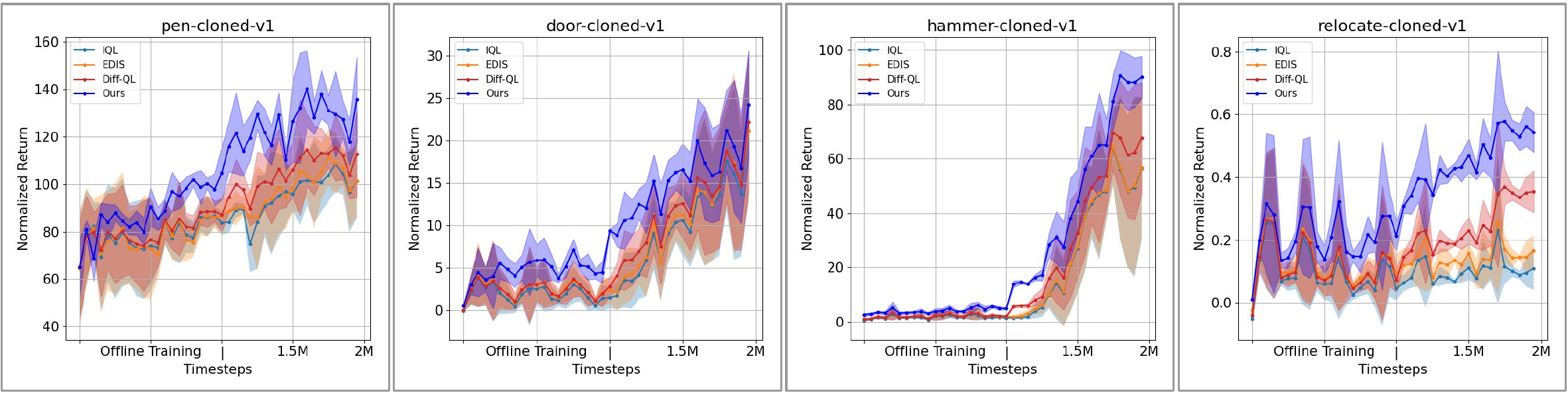}
    \caption{Offline-to-Online Benchmark with IQL-based methods on Adroit environments.}
    \label{fig:adroitIQL}
\end{figure*}

\begin{figure*}[ht!]
    \centering
    \includegraphics[width=1.0\linewidth]{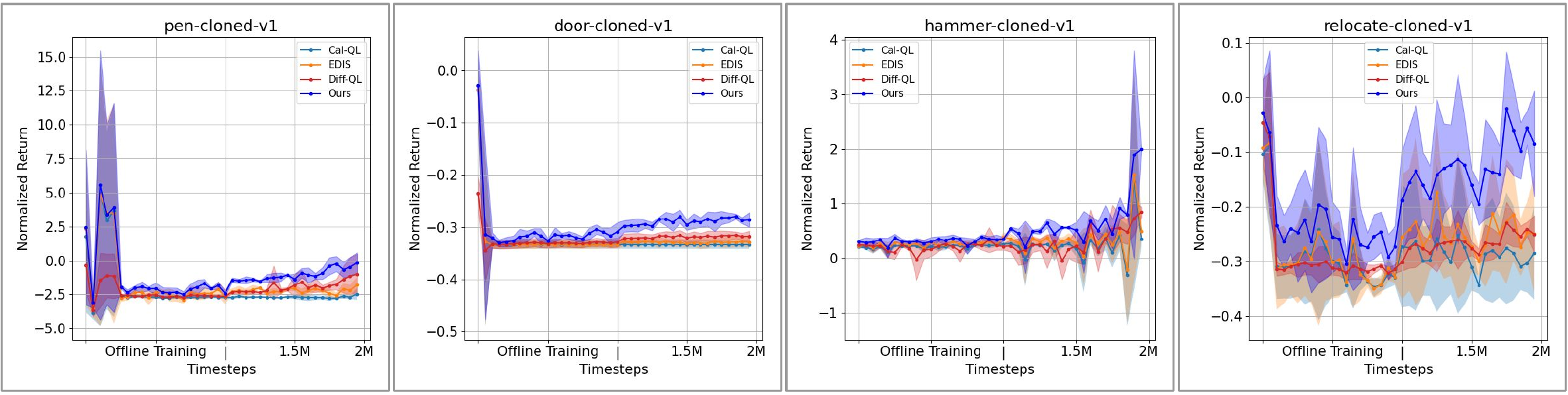}
    \caption{Offline-to-Online Benchmark with Cal-QL-based methods on Adroit environments.}
    \label{fig:adroitCalQL}
\end{figure*}

\end{document}